\def\eqref#1{equation~\ref{#1}}
\def\1{\bm{1}}
\DeclareMathAlphabet{\mathsfit}{\encodingdefault}{\sfdefault}{m}{sl}
\SetMathAlphabet{\mathsfit}{bold}{\encodingdefault}{\sfdefault}{bx}{n}
\newtheorem{definition}{Definition}
\newtheorem{theorem}{Theorem}
\newcommand{\up}{$\uparrow$}
\newcommand{\down}{$\downarrow$}
\title{Global-Order GFlowNets}
\author{%
  Lluís Pastor Pérez \thanks{Work done in an internship at Sony Europe B.V. Main author.} \\
  Department of Computer Science\\
  ETH Zürich\\
  \texttt{lpastor@ethz.ch } \\
 \And
  Javier Alonso García \\
  Sony Europe B.V. \\
  Stuttgart, Germany \\
  \texttt{javier.alonso@sony.com} \\
  \And
  Lukas Mauch \\
  Sony Europe B.V. \\
  Stuttgart, Germany \\
  \texttt{lukas.mauch@sony.com} \\
}
\begin{document}

\maketitle

\begin{abstract}
Order-Preserving (OP) GFlowNets have demonstrated remarkable success in tackling complex multi-objective (MOO) black-box optimization problems using stochastic optimization techniques. Specifically, they can be trained online to efficiently sample diverse candidates near the Pareto front. A key advantage of OP GFlowNets is their ability to impose a local order on training samples based on Pareto dominance, eliminating the need for scalarization – a common requirement in other approaches like Preference-Conditional GFlowNets. However, we identify an important limitation of OP GFlowNets: imposing this local order on training samples can lead to conflicting optimization objectives. To address this issue, we introduce Global-Order GFlowNets, which transform the local order into a global one, thereby resolving these conflicts. Our experimental evaluations on various benchmarks demonstrate a small but consistent performance improvement.
\end{abstract}

\section{Introduction}
Multi-objective optimization is a complex problem that arises in many fields, where multiple competing objectives must be optimized simultaneously. One of the biggest challenges lies in the black-box scenario, where the objectives can only be evaluated indirectly, without explicit knowledge of their underlying functions. In such cases, we often rely on Bayesian or stochastic optimization methods to navigate the solution space and to sample 
diverse candidates near the Pareto front \cite{MOO_basics}.

Recently, GFlowNets have been introduced to solve challenging black-box optimization methods with great success. Numerous methods have been developed to adapt GFlowNets to MOO problems, addressing the challenge of multi-dimensional rewards and Pareto dominance. In these problems, rewards are often conflicting, meaning that improving one objective may worsen another. For example, in Neural Architecture Search (NAS), higher training accuracy often comes with a larger number of parameters, leading to increased latency and computational cost.

One such approach are OP GFlowNets (OP-GFNs) \cite{OP-Gfns}, which use new reward mechanism based on Pareto dominance across subsets. In this approach, the probability of choosing a sample is either uniformly distributed across the Pareto set or zero otherwise. The learned reward is designed to preserve the ordering of utility, minimizing the Kullback-Leibler (KL) divergence \cite{KL} between the true Pareto distribution and the learned distribution. This ensures that the generated samples respect the Pareto dominance structure.

We identified one major shortcoming of OP-GFNs, namely: "The way how OP GFNs impose the local order on the training samples can lead to conflicting training objectives." 

We summarize the contributions of this paper: 1) We propose two new approaches to convert the local order into a global one, defining a new training paradigm for GFlowNets, namely, Global-Order GFlowNets. These methods differ in computational complexity and offer distinct orders, yet both are consistent with the local relation of Pareto dominance. 2) Using a simple example, we demonstrate how the restrictive local order used in OP-GFNs can lead to inconsistencies. 3) We test our methods on different benchmarks, achieving comparable results and, in some cases, improving the performance.

\section{Related Work}
\subsection{Multi-Objective Optimization}
Multi-Objective Optimization (MOO), also known as Pareto optimization, deals with problems involving more than one objective function to be optimized simultaneously. Without loss of generality, we consider the problem of maximizing a function. Formally, a MOO problem is defined as follows:

\begin{definition}[MOO problem]
Let $\mathcal{X}$ be the data (solution) space, $\mathcal{Y}$ the d-dimensional objective space and $F = (f_1, f_2, ..., f_d) : \mathcal{X} \to \mathcal{Y}$. Then, the MOO problem associated with this setup is
    $$\mathrm{argmax}_{x\in \mathcal{X}} (f_1(x),f_2(x),...,f_d(x))$$
\end{definition}

The essence of MOO is to optimize multiple objectives simultaneously, with no objective being more important than another. Unlike single-objective optimization, MOO aims to find a set of optimal solutions, known as the Pareto front, which represents trade-offs among the objectives.

\begin{definition}[Pareto Dominance and Pareto Set]
Let $\mathcal{X}$, $\mathcal{Y}$, and $f_1, f_2, \ldots, f_d$ be as defined above. For $x, x' \in \mathcal{X}$, $x$ Pareto dominates (or simply dominates) $x'$ if:
\begin{enumerate}
\item For every $i \in  \{1, \ldots, d\}$, $f_i(x) \geq f_i(x')$.
\item There is a $j \in \{1, \ldots, d\}$ such that $f_j(x) > f_j(x')$.
\end{enumerate}
An $x \in \mathcal{X}$ is non-dominated if $\nexists x' \in \mathcal{X}$ such that $x'$ dominates $x$. The set $\mathcal{P}_{\mathcal{X}}$, consisting of non-dominated samples, is the Pareto set. The images under $f_1, ... f_d$ of all $x\in \mathcal{P}_{\mathcal{X}}$ is the Pareto front.
\end{definition}

We consider a Multi-Objective Optimization (MOO) in a black-box scenario, which means that the objective functions $f_i(x)$ are not known analytically and can only be evaluated through stochastic optimization. Our primary interest lies in sampling diverse candidates near the Pareto front with high probability. This requires efficient stochastic optimization methods to navigate the uncertainty inherent in this black-box setup.

\subsection{Performance measures for Multi-Objective Optimization}
MOO algorithms must address two critical aspects: 1) {\bf Fidelity}: Ensuring that sampled solutions are close to the true Pareto front, indicating a good approximation of the optimal trade-offs among the objectives. This is typically measured by distance-based metrics such as the \emph{Inverted Generational Distance (IGD\(^+\))} and the \emph{Averaged Hausdorff Distance (\(d_H\))} \cite{GD+,IGD}. 2) {\bf Coverage}: Guaranteeing that we cover a significant portion of the entire Pareto front, providing a comprehensive understanding of the optimal solution space. This is typically measured by diversity and spread metrics such as the \emph{Hypervolume Indicator (HV)}, \emph{Pareto Coverage}, and \emph{Pareto-Clusters Entropy (PC-ent)} \cite{HV,goal_cond_gflownets}. We employ more metrics to compare the different methods, with details available in Appendix \ref{sec:Metrics}.

\subsection{GFlowNets for multi-objective optimization}
Generative Flow Networks (GFlowNets) \cite{gflownet,gflownet_foundations} have emerged as a novel class of probabilistic generative models. GFlowNets are controllable, in the sense that they can generate samples proportional to a given reward function $x \propto R(x)$. Compared to MCMC methods, they amortize the sampling in a training step and, hence, are more efficient \cite{mcmcforml}. 

More specifically, GFlowNets decompose the sampling process into sequences of actions that are applied to, and that modify, an object $s_0$, forming a trajectory of partial objects $\tau=(s_0, s_1, ..., s_n)$ that terminate in an observation $x=s_n$. GFlowNets are typically parametrized by a triplet $P_F(s_t|s_{t-1};\theta)$, $P_B(s_{t-1}|s_{t};\theta)$ and $Z_\theta$, namely the forward policy, the backward policy and the partition function. By forcing $P_F$, $P_B$ and $Z_{\theta}$ into balance over a set of training trajectories, i.e. by enforcing for each trajectory that
\begin{equation}
    R(x)\prod_{t=1}^nP_B(s_{t-1}|s_t;\theta) = Z_\theta \prod_{t=1}^nP_F(s_t|s_{t-1};\theta),
    \label{eqn:tb_objective_one}
\end{equation}
we assure that objects $x$ are generated from $s_0$ with $P(x;\theta) = \frac{R(x)}{\sum_{\forall x' \in \mathcal{X}} R(x')}$, when adhering to the forward policy $P_F$.

Similar to MCMC methods, GFlowNets can be easily applied for stochastic optimization. Let $f(x)$ be an objective function we want to maximize, we can choose $R(x)=\exp(\beta f(x))$ to sample candidates that maximize $f(x)$ proportionally more often \cite{Kirkpatrick1983}. Here, $\beta \in \mathbb{R}^+$ is a temperature term that controls the variance.

There are different methods to adapt GFlowNets to MOO problems, One approach is Preference-Conditional GFlowNets (PC-GFNs) \cite{moo_gfn}, which use scalarization on the rewards. More specifically, scalarization means choosing preferences $w: \: R_{w}(x) = w^\top F(x)$, where $\mathbbm{1}^\top w=1, w_k\geq 0$.  The scalarized objective is then used as a reward signal for a conditional GFlowNet. Of course, the choice of $w$ strongly influences the shape of $R_w(x)$ and therefore the regions in which the GFlowNet will generate the most samples. Goal-Conditioned GFlowNets (GC-GFNs) \cite{goal_cond_gflownets} tried to control the generation and introduced different focus regions that can steer the generation process. 

The Order-Preserving (OP) GFlowNet  \cite{OP-Gfns} is a variant that is particularly useful for MOO. It does not rely on a predefined reward signal $R(x)$ as it introduces a new reward mechanism based on Pareto dominance across subsets of $\mathcal{X}$. More specifically, let $\mathcal{X}' \subset \mathcal{X}$, an OP GFlowNet defines the probability of a sample $x \in \mathcal{X}'$ to be in the Pareto set $\mathcal{P}_{\mathcal{X}'}$ of the given subset $\mathcal{X}'$ as
\begin{align}\label{eq:general_formulation}
    P(x|\mathcal{X}')= \frac{ \mathbf{1}[x\in \mathcal{P}_{\mathcal{X}'}]}{|\mathcal{P}_{\mathcal{X}'}|},
\end{align}
where $\mathbf{1}[\cdot]$ is the indicator function. 

An OP GFlowNet is trained to approximate $P(x|\mathcal{X}')$ for any subset $\mathcal{X}'$, i.e., to sample uniformly from the Pareto set of any subset $\mathcal{X}'$. Let $R(x;\theta)$ be the reward that is implicitly defined by Eq.\ref{eqn:tb_objective_one} for given $P_F$, $P_B$ and $Z_\theta$, OP GFlowNets minimize the Kullback-Leibler divergence
\begin{align}
\label{eq:main_order_loss}
    \mathcal{L}_{\rm OP}(\mathcal{X}') = \text{KL}( P(x|\mathcal{X}')\|P(x|\mathcal{X}';\theta)),
\end{align}
across the subsets $\mathcal{X}' \subset \mathcal{X}$, where the conditional $P(x|\mathcal{X}';\theta) = \frac{R(x;\theta)}{\sum_{x'\in \mathcal{X}'}R(x';\theta)}, \: \forall x\in \mathcal{X}'$. The OP GFlowNet then learns a distribution $P(x;\theta)$ over the full set $\mathcal{X}$, that is consistent with all marginals, i.e.
\begin{align}
    P(x;\theta) = P(x|\mathcal{X}'; \theta)P(\mathcal{X}'; \theta), \: \forall \mathcal{X}' \subseteq \mathcal{X}.
\end{align}

Note, that for all subsets $\mathcal{X}': \: \mathcal{P}_{\mathcal{X}} \cap \mathcal{X}' = \emptyset \quad \Rightarrow \quad P(\mathcal{X}';\theta) = 0$ such that the conditionals $P(x|\mathcal{X}'; \theta)$ are consistent with $P(x; \theta)$. For scalar optimization problems with a single optimum, a possible $P(x; \theta)$ with the required properties exists and is the limit of the softmax $P(x; \theta) =\lim_{\gamma \rightarrow \infty} \frac{e^{\gamma g(f(x))}}{\sum_{x' \in \mathcal{X}} e^{\gamma g(f(x'))}}$, where $g(\cdot)$ is a monotonically increasing function that preserves the order on the image of $f$, that is, $f(x_1) \leq f(x_2) \Rightarrow g(f(x_1)) \leq g(f(x_2))$.

OP-GFNs' main advantage is that their loss function does not need the definition of a scalar reward. They are trained to sample proportional to given target distributions in the subset $\mathcal{X}'$ of the solution space. These target distributions are uniform over the Pareto set of these subsets, and $0$ otherwise.

\section{Method}

\subsection{Local Order Dilemma of OP GFNs}
We noticed that different to the scalar case, training with the target distribution across subsets that has been proposed for OP GFNs can lead to inconsistencies that cannot be resolved.

Figure~\ref{fig:local_dilema} shows a simple failure case, where we want to maximize two objective functions $F=(f_1, f_2)$ over the set $\mathcal{X} = \{x_1,x_2,x_3,x_4\}$. Ideally, we want to learn a $P(x;\theta)$ that is uniform over the Pareto set $\mathcal{P}_{\mathcal{X}} = \{x_3, x_4\}$ and zero otherwise, by fitting the conditionals over the subsets $\mathcal{X}_1 = \{x_2, x_4\}$, $\mathcal{X}_2 =\{x_2, x_3\}$, $\mathcal{X}_3 = \{x_3, x_4\}$, using the target distributions from Eq.~\ref{eq:general_formulation}. 

However, for this specific example we cannot construct a $P(x)$ that is consistent with all possible conditional reference distributions. More specifically, we recognize that each subset contains at least one Pareto optimal point, meaning that none of the marginals $P(\mathcal{X}_k), \: k=1,2,3$ are allowed to vanish. Further, Eq.~\ref{eq:general_formulation} imposes the conditions: 1) $P(x_2)=P(x_4)=0.5$, 2) $P(x_3)=P(x_4)=0.5$ and 3) $P(x_2|\mathcal{X}_2)=0$. The last condition leads to a contradiction, because $P(\mathcal{X}_2) \neq 0$, yielding $P(x_2)=0$.

This problem arises, because we do not allow the probability of a point within the Pareto set of 
$\mathcal{X}_k$ to vanish. However, this is necessary in order to resolve this contradiction. In the following, we resolve this issue by: 1) Defining a global order over the elements of $\mathcal{X}$ that is consistent with the Pareto dominance. 2) Training the GFlowNet to sample proportional to this global order.

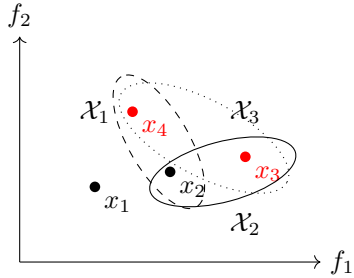
\begin{figure}[h!]
    \centering
        \begin{tikzpicture}[scale=1]
  \draw[->] (0,0) -- (4,0) node[right] {$f_1$};
  \draw[->] (0,0) -- (0,3) node[above] {$f_2$};

  \fill (1,1) circle (2pt) node[below right] {$x_1$};
  \fill (2,1.2) circle (2pt) node[below right] {$x_2$};
  \fill[red] (3,1.4) circle (2pt) node[below right] {$x_3$};
  \fill[red] (1.5,2) circle (2pt) node[below right] {$x_4$};

  \draw[dashed, rotate around={120:(1.85,1.6)}] (1.85,1.6) ellipse (1 and 0.4);
  \node at (1, 2) {$\mathcal{X}_1$};
  \draw[dotted, rotate around={-28:(2.45,1.65)}] (2.45,1.65) ellipse (1.25 and 0.5);
  \node at (3, 2) {$\mathcal{X}_3$};

  \draw[solid, rotate around={15:(2.7,1.2)}] (2.7,1.2) ellipse (1 and 0.4);
  \node at (3, 0.5) {$\mathcal{X}_2$};
\end{tikzpicture}
    \caption{An example for the local order dilemma of OP GFNs. We can construct three different subsets $\mathcal{X}_1$, $\mathcal{X}_2$, $\mathcal{X}_3$, for which we can no longer construct $P(x;\theta)$ that is consistent with all conditionals.}
    \label{fig:local_dilema}
\end{figure}

\subsection{Global Orders}
This contradiction proves the need for a global ordering perspective. By assigning a global $\hat{R}$ that respects Pareto's dominance universally, rather than subset-by-subset basis, we completely avoid such inconsistencies.
Our goal is to define a global ordering function $\hat{R}$ that assigns a rank or score to each point $x \in X$, consistently reflecting Pareto dominance, so that it does not depend on any subset, but on the seen data $\mathcal{X}$. Specifically, if $x_1$ dominates $x_2$, then we must have $\hat{R}(x_1) > \hat{R}(x_2)$. Contrarily to OP-GFNs, if $x_1$ and $x_2$ are not comparable, it is up to the global order to decide whether $\hat{R}(x_1)$ is greater, equal, or lower than $\hat{R}(x_2)$. By introducing a global perspective, we avoid the contradictions that arise when partial orders are combined from multiple subsets. We propose two main methods for defining such global order, and we name Global-Order GFlowNets to GFlowNets with a global ordering.

\subsubsection{Global Rank}
\label{sec:global_rank}

The Global Rank method (Algorithm~\ref{algo:global_rank}) iteratively identifies Pareto fronts within the dataset $D$, assigning integer ranks that show how close each point is to the Pareto front. We first find the Pareto front of $D$ and assign it the lowest temporary rank. After removing these points, we find the next Pareto front from the remaining set and assign it the next integer rank, and so on, until all points have been assigned a rank (or earlier if we decide to stop the process). In the end, we invert the ranking so that points on the first Pareto front receive the highest $\hat{R}$ values.

\begin{algorithm}[t]
\caption{\textsc{Global Rank}}
\label{algo:global_rank}
\begin{algorithmic}[1]
\Require Dataset $D$, objective functions $f_1,\ldots,f_d$

\State $i \gets 0$
\State $D_{\text{orig}} \gets D$
\While {$|D| > 0$}
  \State $P_{\text{front}} \gets \texttt{Compute\_Pareto}(D, f_1, \ldots, f_d)$
  \For {$p \in P_{\text{front}}$}
    \State $rank(p) \gets i$ \Comment{Assign current rank to Pareto front points}
  \EndFor
  \State $D \gets D \setminus P_{\text{front}}$
  \State $i \gets i + 1$
\EndWhile
\State $max\_rank \gets i$ \Comment{Maximum assigned rank}
\For {$p \in D_{\text{orig}}$}
  \State $\hat{R}(p) \gets max\_rank - rank(p)$ \Comment{Invert ranks}
\EndFor

\State \Return $\hat{R}$ \Comment{Return the global ranks for all points in $D_{\text{orig}}$}
\end{algorithmic}
\end{algorithm}
Although Global Rank guarantees consistency with Pareto dominance, it can be computationally expensive since it repeatedly computes Pareto fronts. For large-scale problems, the number of iterations may be capped, assigning a default minimal rank to all remaining points after a cutoff. This slightly relaxes theoretical guarantees for very poor solutions, but provides substantial computational savings.

Once $\hat{R}$ is defined through global ranking, we can train the GFlowNet such that $R(X;\theta)$ approximates either $\hat{R}(X) = (\hat{R}(x_1), \ldots, \hat{R}(x_B))$, its softmax transformation, or the indicator function $\mathbf{1}_{u = \max(\hat{R}(X))}[\hat{R}(X)]$. The choice depends on whether we prioritize uniform exploration along Pareto fronts (for example, by using softmax) or place a stricter focus on top-ranked points.

\subsubsection{Nearest Neighbor Order}

The Nearest Neighbor Order (Algorithm ~\ref{algo:nn_rank}) is an alternative global ranking approach that bases a point’s score on its distance to the Pareto front. Points on the Pareto front receive $\hat{R}=0$, and all other points receive negative values proportional to their minimal distance to any Pareto optimal point.

We first compute the Pareto front $P$ of $D$. For each $x \notin P$, we measure the distance $d(x, P) = \min_{p \in P} d(x, p)$. We then set $\hat{R}(x) = -d(x,P)$, while $\hat{R}(p)=0$ for $p \in P$. Alternatively, the distance can also be computed by interpolating along the Pareto front.

\begin{algorithm}[t]
\caption{\textsc{Nearest Neighbor Order}}
\label{algo:nn_rank}
\begin{algorithmic}[1]
\Require Dataset $D$, objective functions $f_1,\ldots,f_d$, distance metric $d(\cdot,\cdot)$

\State $P \gets \texttt{Compute\_Pareto}(D, f_1, \ldots, f_d)$
\For{$p \in P$}
  \State $\hat{R}(p) \gets 0$ \Comment{Points on the Pareto front have $\hat{R}=0$}
\EndFor

\For{$x \in D \setminus P$}
  \State $distances \gets []$ \Comment{Collect distances to all Pareto front points}
  \For{$p \in P$}
    \State $distances.\text{append}(d(x, p))$
  \EndFor
  \State $\hat{R}(x) \gets -\min(distances)$ \Comment{Assign rank as negative min. distance to Pareto front}
\EndFor

\State \Return $\hat{R}$ \Comment{Return $\hat{R}$ assignments for all points in $D$}
\end{algorithmic}
\end{algorithm}

Normalization of reward scales is critical here to avoid bias in the distance computations. Additionally, this approach may be less suitable if rewards differ significantly in their scales or distributions.

We show an example on how these orders, although defined under the same principle, can result in largely different reward distributions. Consider the discretization of the square with (target) rewards \( r: [0, 1] \times [0, 1] \to \mathbb{R}^2 \), $
r(x, y) = \left( \pi x \cos(\pi x),\ \pi y \sin(\pi y) \right).$ After applying the two orders, we obtain completely different values of $\hat{R}$, as shown in Figure \ref{fig:rew_sin}.
\begin{figure}[h]
    
    \centering
    \begin{subfigure}[b]{0.45\textwidth}
        \includegraphics[width=\textwidth]{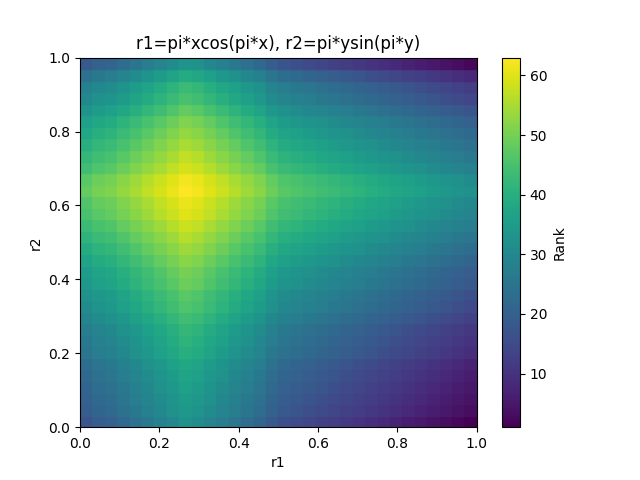}
        
    \end{subfigure}
     \begin{subfigure}[b]{0.45\textwidth}
        \includegraphics[width=\textwidth]{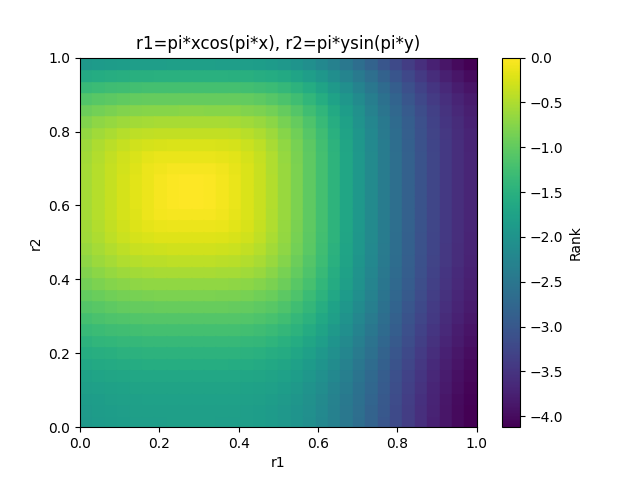}
    \end{subfigure}
\caption{$\hat{R}$ computed with Global Rank (left) and Nearest Neighbor (right) algorithms}
\label{fig:rew_sin}
\end{figure}

\section{Experiments}
In this section, we evaluate our different Global-Order GFlowNets across various benchmarks to demonstrate improvements over previous results. The definition of the evaluation metrics can be found in Appendix \ref{sec:Metrics}. We begin this section explaining the usage of replay buffers. Then, we present results for key benchmarks such as Fragment-based molecule generation, QM9, and DNA, detailing each benchmark's importance, methods, parameters, and comparisons with other algorithms. For the synthetic benchmarks, HyperGrid and N-Grams, we have included the results and detailed analysis in Appendix \ref{appendix:experiments}. Our methods are marked with an asterisk (*) in all tables.

\subsubsection{Replay Buffer and Cheap GFlowNets}
In MOO, traditional training methods often rely solely on the samples generated at each step, leading to unstable training and convergence issues. To address this, most experiments incorporate a replay buffer, where newly generated trajectories are stored and randomly sampled for training, stabilizing the learning process.

In single-objective scenarios, it is common to sample from both high- and low-reward experiences, while in MOO, samples are drawn from the Pareto front of all observed samples. We implement a warm-up phase to avoid the network getting stuck in a local optima.

The training process involves drawing new samples, updating the Pareto front, and drawing batches from both the buffer and the Pareto front, ensuring at least $k\in\mathbb{N}$ Pareto optimal samples are included. This approach allows for an efficient and cost-effective variant of Global Rank GFlowNets, named Cheap-GR-GFNs, by focusing only on the actual Pareto front, significantly reducing computational costs.

From now on, we refer to our methods as follows: Global-Rank GFlowNets (GR-GFNs), Trimmed Global-Rank GFlowNets with a maximum rank \( k \) (GR-GFNs (\( k \))), Cheap Global-Rank GFlowNets (Cheap GR-GFNs), Nearest Neighbor GFlowNets (NN-GFNs), and their variant with linear interpolation of the Pareto Front (NN-int-GFNs). Due to the amount of different experiments, most of the plots and the tables can be found in Appendix \ref{appendix:experiments}.
\subsection{DNA Sequence Generation}
GFlowNets function as samplers of trajectories, making them naturally suited for addressing various problems in biology and chemistry. In this task, we generate DNA sequences by adding one nucleobase at a time: adenine (A), cytosine (C), guanine (G), or thymine (T) \cite{dna, dna2, dna3, dna4}.
In this work, we evaluate \texttt{energy-pairs} and \texttt{energy-pins-pairs}, whose details can be found in Appendix \ref{appendix:dna}.
For consistency and due to the similarity of the problem structure, we compare our approach using the same algorithms (PC-GFNs and OP-GFNs) and parameters as in the N-Grams task found in Appendix \ref{appendix:ngrams}. Following the advice of \cite{moo_gfn, OP-Gfns}, we set $\beta = 80$ for PC-GFNs.
\subsubsection{Results}
The results are presented in Table \ref{tab:dna_results}. For the two rewards case we see that the three methods perform optimally, but for the three rewards, ours clearly outperform the other two, except for the Top-k diversity. In \cite{OP-Gfns}, authors claim that their method does not sample as close to the Pareto front as PC-GFNs, but their method explores more. This is repeated here with our method, now exploring more, yet getting a slightly worse Pareto front in \texttt{Energy-Pairs}, as shown in Figure \ref{fig:dna_plots}.

\begin{table}[h]
\caption{Results for DNA benchmark}
\centering
\sisetup{
  table-format=2.2,
  detect-weight=true,
  detect-inline-weight=math
}
\begin{tabular}{
  l
  S[table-format=1.2]
  S[table-format=2.2]
  S[table-format=1.2]
  S[table-format=1.2]
  S[table-format=2.2]
}
\toprule
\textbf{Energy - Pairs} & {HV (\up)} & {R$_2$ (\down)} & {PC-ent (\up)} & {d$_H(P',P)$ (\down)} & {Diversity (\up)} \\
\midrule
OP-GFNs           & \textbf{0.35} & \textbf{2.65} & \textbf{0.00} & \textbf{0.53} &3.98 \\
PC-GFNs           & \textbf{0.35} &\textbf{2.65} & \textbf{0.00} & \textbf{0.53} &5.27 \\
Cheap-GR-GFNs* & \textbf{0.35} & \textbf{2.65} & \textbf{0.00} & \textbf{0.53} &\textbf{\;\;6.97}\\
\midrule
\textbf{Energy - Pins - Pairs}  \\
\midrule
OP-GFNs           & 0.07 & 17.32 & 0.69 & 0.74 & \textbf{\;\;11.5} \\
PC-GFNs           & 0.08 & 17.08 & 0.69 & 0.75 & 6.34 \\
Cheap-GR-GFNs* & \textbf{0.11} & \textbf{14.44} & \textbf{1.39} & \textbf{0.67} & 6.49 \\
\bottomrule
\end{tabular}

\label{tab:dna_results}
\end{table}
\begin{figure}[h]
\centering
        \includegraphics[width=0.7\textwidth]{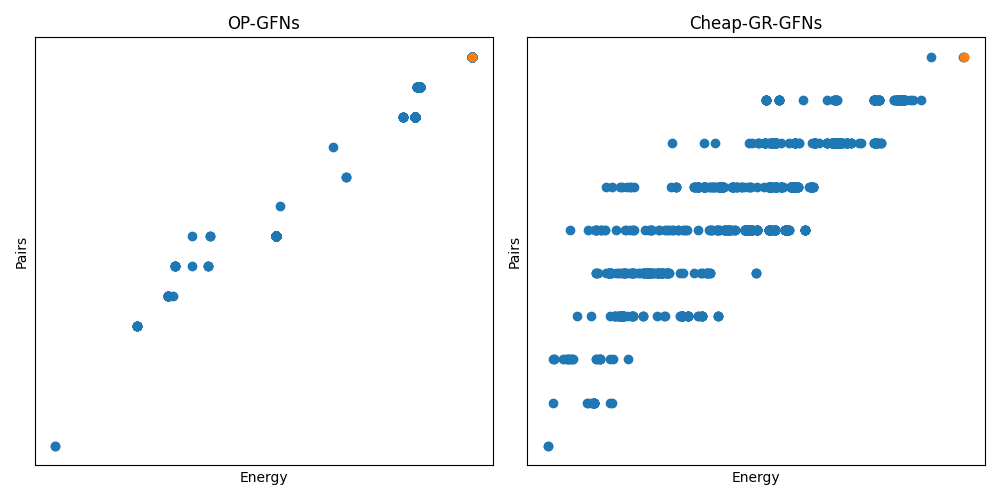}
    \caption{The 1280 generated samples, with 18 unique samples for OP-GFNs compared to 356 of Cheap-GR-GFNs}
    \label{fig:dna_plots}
\end{figure}

\subsection{Fragment-Based Molecule Generation}\label{func:SA}
We continue with the chemistry-themed benchmarks, particularly with the Fragment-based molecule generation from \cite{frag, gflownet}, based on computational chemistry and Machine Learning, aimed at designing new molecules with desired properties. This task is significant in fields like pharmaceuticals, material science, and biochemistry, as new molecules can lead to new drugs, materials, and 
chemical processes. The objective functions are \texttt{SEH},\texttt{QED},\texttt{SA}, and \texttt{MW}, whose details can be found in Appendix \ref{appendix:frag}. We compare our methods Cheap-GR-GFNs and GR-GFNs with OP-GFNs, PC-GFNs and the Goal-Conditioned GFNs (GC-GFNs).
\subsubsection{Results}
We show the 3200 generated candidates in Figure \ref{fig:seh_plots}, and the evaluation metrics in Table \ref{tab:seh_table}. As with the metrics it is not clear to determine which method performs better; we also added a column indicating the following metric. We select all the Pareto fronts from each method as candidates, and we recompute the new Pareto front. Then, we count how many points for each method are not dominated. We show this in Figure \ref{fig:seh_pareto}. We see that for the cases of \texttt{QED-SA} and \texttt{SEH-SA}, our methods explore the \texttt{SA} function more than OP-GFNs, which tend to explore the other objective function. For \texttt{SEH-QED} and \texttt{SEH-MW}, our methods sample closer from the Pareto front. For the other two pairs, we do not find significant differences among the different methods because the Pareto front is easier to find.

\subsection{QM9}
A related challenge to Fragmentation-based molecule generation is the QM9 environment \cite{qm9}, where molecules are generated by sequentially adding atoms and bonds, with a maximum of 9 atoms. We explore objective functions such as \texttt{MXMNet} (HOMO-LUMO gap prediction), \texttt{logP}, \texttt{SA}, and a modified \texttt{MW} function, detailed in the Appendix \ref{appendix:QM9}

\subsubsection{Results}
The plots containing the 3200 generated candidates are presented in Figure \ref{fig:qm9_plots}, while the metrics used to evaluate them are shown in Table \ref{tab:qm9_table}. Again we compute the number of non-dominated samples as in the previous benchmark. Remarkably, these metrics show improvements in most of the objective functions. To support this point, we plot the different Pareto fronts in Figure \ref{fig:qm9_pareto_plots}.

\section{Conclusion}
We introduced a new approach to adapt GFlowNets for MOO tasks, developing PC-GFNs and OP-GFNs, which create weighted rewards and local orders among samples, respectively. We combined these methods to define a global reward that imposes a global order, hence defining Global-Order GFlowNets.

In our experiments, Cheap-GR-GFNs and GR-GFNs performed on par with or better than PC-GFNs and OP-GFNs, especially in the HyperGrid, DNA, and QM9 benchmarks, achieving results closer to the ideal Pareto front. Cheap-GR-GFNs outperformed PC-GFNs in several N-Grams tests, particularly in challenging 4-unigram and 4-bigram cases. In the Fragmentation-based molecule generation benchmark, our methods showed strong performance, with GR-GFNs and OP-GFNs sampling closer to the Pareto front in several instances.

While NN-GFNs excelled in the HyperGrid problem, capturing the full Pareto front in specific cases, their performance declined in other benchmarks. 
Although no algorithm proved better performance across all tests, our proposed methods represent a competitive alternative, excelling in specific scenarios.

The choice of the global order depends on the problem. In high-dimensional or real-world cases, we recommend using Cheap-GR-GFNs as they keep the complexity of OP-GFNs. This is especially useful in real-world applications where computational efficiency is important, but also the ability to navigate multiple objectives.
\subsection{Limitations and Future Work}
This work encompasses the concept of finding different global orders that are consistent with the local order given by the relation of Pareto dominance. Although we have discovered many different global orders, it is mandatory to remark that we have not found them all. More importantly, we have not established the optimality of any of our methods, and it remains possible that alternative global orders could outperform those presented in this work. Possible improvements in this matter are potential future approaches in this direction of research. We note that our methods do not surpass others in every aspect. However, in cases where they do not outperform, they either exhibit enhanced exploration capabilities (as shown in Figure \ref{fig:dna_plots}) or maintain competitive results. Finally, we encountered a significant challenge with Botorch \cite{botorch}, where a bug in the Pareto computation function directly impacted the retrieval of Pareto points, substantially hindering our progress. However, we are pleased to report that, after notifying the developers, the issue has been resolved. Experiments were conducted on a system with 8× RTX 6000 Ada GPUs, and 2× AMD EPYC 7302 CPUs.
\bibliographystyle{plainnat}
\bibliography{iclr2025_conference}


\appendix
\section{Global Orders}
We first prove that the Global Rank algorithm is consistent with the local relation of Pareto Dominance.
\begin{theorem}[Consistency of Global Rank with Pareto Dominance]\label{appendix:proof}
Let $\mathcal{X}$ be a set of samples with associated rewards $R_1, R_2, ..., R_d$, and let $\hat{R}: \mathcal{X} \to \mathbb{R}$ be a ranking function produced by the Global Rank algorithm. For any two samples $x_1, x_2 \in \mathcal{X}$, if $x_1$ Pareto dominates $x_2$, then $\hat{R}(x_1) > \hat{R}(x_2)$.
\end{theorem}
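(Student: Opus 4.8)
The plan is to reduce the claim to a statement purely about the integer ranks assigned by the peeling procedure, and then to track at which iteration each point is removed. Observe first that the final inversion step sets $\hat{R}(x) = max\_rank - rank(x)$ for every $x$, so the inequality $\hat{R}(x_1) > \hat{R}(x_2)$ is equivalent to $rank(x_1) < rank(x_2)$. It therefore suffices to show that a dominating point is always peeled off at a strictly earlier iteration than the point it dominates.

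To organize the argument, I would introduce notation for the sequence of residual sets produced by the while loop: let $D_0 = D_{\text{orig}}$ and $D_{i+1} = D_i \setminus P_{\text{front}}(D_i)$, where $P_{\text{front}}(D_i)$ denotes the Pareto front of $D_i$ computed in iteration $i$. The key bookkeeping fact is the equivalence that $x \in D_i$ holds exactly when $rank(x) \geq i$: a point is removed precisely in the iteration in which its rank is assigned, and it never reappears in a later residual set. I would also record, to guarantee that the loop terminates and that every point receives a rank, that any finite non-empty set has a non-empty Pareto front, since a finite set under the strict partial order of Pareto dominance always possesses a maximal (i.e.\ non-dominated) element.

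The heart of the proof is a single non-coexistence observation. Let $k = rank(x_2)$, so that $x_2 \in P_{\text{front}}(D_k)$, i.e.\ $x_2$ is non-dominated within $D_k$. Suppose, for contradiction, that $x_1 \in D_k$. Since $x_1$ Pareto dominates $x_2$ by hypothesis, $x_2$ would then be dominated by an element of $D_k$, contradicting $x_2 \in P_{\text{front}}(D_k)$. Hence $x_1 \notin D_k$, and by the bookkeeping equivalence this forces $rank(x_1) < k = rank(x_2)$. Combining this with the reduction from the first step yields $\hat{R}(x_1) > \hat{R}(x_2)$, as required.

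I expect the only delicate point to be setting up the residual-set notation cleanly and verifying the invariant $x \in D_i \iff rank(x) \geq i$, which is what makes the contradiction step immediate; the dominance argument itself is then a one-line consequence of the definition of the Pareto front. The remaining items are routine housekeeping: confirming termination via non-emptiness of Pareto fronts of finite sets, so that $rank$ is well defined on all of $\mathcal{X}$, and noting that the (optional) early-stopping cutoff mentioned after the algorithm does not affect points whose ranks are actually assigned by the peeling step.
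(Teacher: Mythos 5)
Your proof is correct, and it takes a slightly but genuinely different route from the paper's. Both arguments analyze the same peeling process and rest on the same non-coexistence fact, namely that a point cannot lie on the Pareto front of a residual set that still contains one of its dominators; the difference is where each argument pivots. The paper first establishes the weak inequality (while $x_1$ remains in $D$, so does $x_2$), and then obtains strictness by examining the iteration in which $x_1$ is ranked: it invokes \emph{transitivity} of Pareto dominance to argue that every dominator of $x_1$ also dominates $x_2$, so that once all dominators of $x_1$ are removed, $x_1$ enters the current Pareto front while $x_2$, still dominated by $x_1$, does not. You instead pivot at the iteration $k = rank(x_2)$ in which $x_2$ is ranked and derive a contradiction from the assumption $x_1 \in D_k$; this delivers the strict inequality in a single step and, notably, never uses transitivity --- the hypothesis that $x_1$ dominates $x_2$ suffices on its own. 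Your explicit invariant ($x \in D_i$ if and only if $rank(x) \geq i$), together with the reduction of $\hat{R}(x_1) > \hat{R}(x_2)$ to $rank(x_1) < rank(x_2)$ through the inversion step, also makes precise the bookkeeping that the paper leaves informal. What each approach buys: the paper's two-step version gives a more constructive picture, identifying exactly when $x_1$ gets ranked; yours is more economical, requiring fewer properties of the dominance relation and a single contradiction, and it handles termination and well-definedness of $rank$ explicitly, which the paper's proof takes for granted.
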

\begin{proof}
    The first affirmation to notice is that while $x_1$ is still in $D$ (i.e., it has not been assigned a rank yet), $x_2$ will be there as well, due to the fact that $x_2$ cannot be in the Pareto front as there is a point that dominates it ($x_1$), hence $\hat{R}(x_2)\leq \hat{R}(x_1)$. Also, it is important to notice that all points that dominate $x_1$, dominate $x_2$ as the Pareto dominance is a transitive relation. Therefore, when all the points that dominate $x_1$ have been assigned a rank, $x_1$ will be in the new Pareto front, so $\hat{R}(x_1) > \hat{R}(x_2)$. 
\end{proof}

\subsection{Comparison of Global Orders}
To illustrate how the ranks are distributed across samples, we present different examples in Figure \ref{fig:plot_comp}, where the data samples are the discretization of the square $ [0,1 ]\times[0,1]$ in squares of size $1/32$ and the rewards are $r: (x,y) \to (x, y)$, $r: (x,y) \to (x, \frac{y}{1 + x^2})$, $r: (x,y) \to (x, (e^{-(x-1/2)^2}+y)/2)$, and finally $r: (x,y) \to ((\pi x)cos(\pi x),(\pi y)sin(\pi y))$. We also show the distribution of the rewards across the different points of the square in Figure \ref{fig:rew_points}.

\begin{figure}[h]
    \centering
    \begin{subfigure}[b]{0.4\textwidth}
        \includegraphics[width=\textwidth]{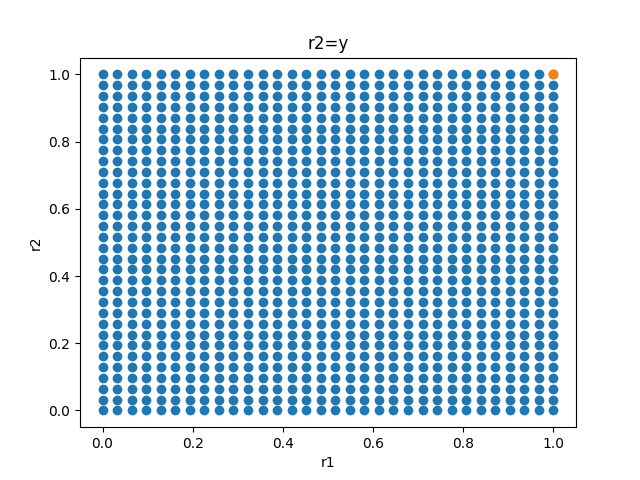}
        \label{fig:uniform}
    \end{subfigure}
     \begin{subfigure}[b]{0.4\textwidth}
        \includegraphics[width=\textwidth]{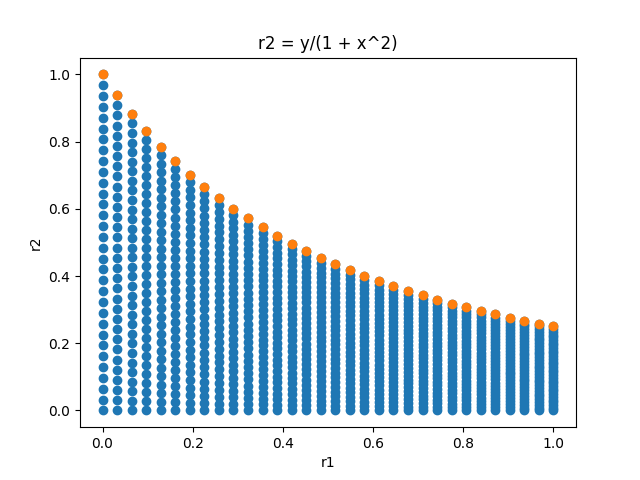}
        \label{fig:uniform}
    \end{subfigure}
    \hfill 
    \begin{subfigure}[b]{0.4\textwidth}
        \includegraphics[width=\textwidth]{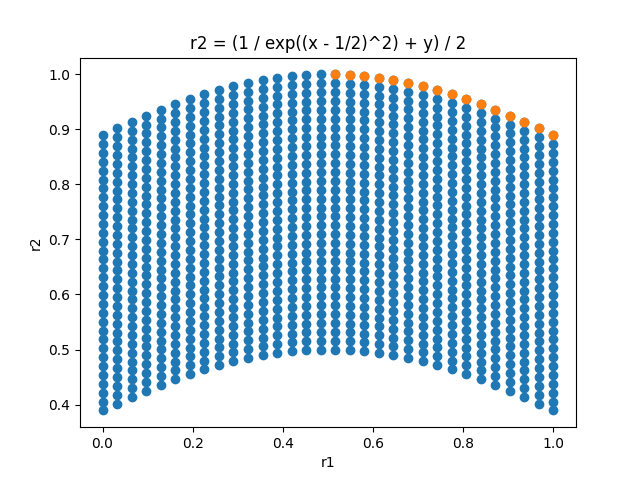}
        \label{fig:x_axis_exp}
    \end{subfigure}
    \begin{subfigure}[b]{0.4\textwidth}
        \includegraphics[width=\textwidth]{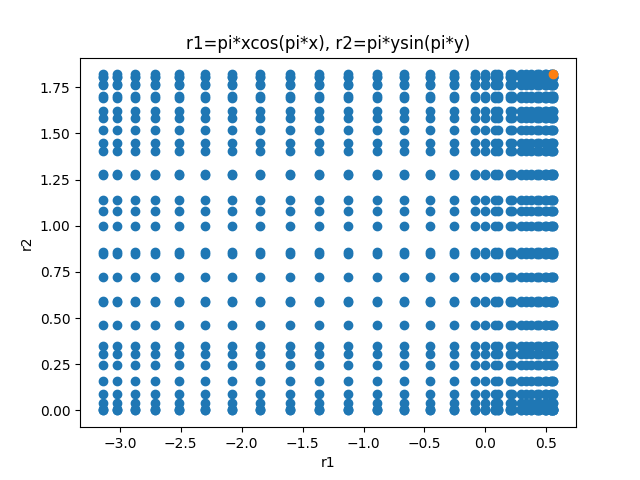}
        \label{fig:x_axis_exp}
    \end{subfigure}
    \caption{Image of the different points in the $[0,1]\times [0,1]$. In orange, we find the Pareto front. In the first cases, we omit $r_1(x,y)=x$ for clarity.}
    \label{fig:rew_points}
    \end{figure}

The first example to consider is the more straightforward case, where the rewards are the projections $r_1(x,y) = x$ and $r_2(x,y) = y$. In this setup, we observe that the image of the points is, trivially, uniformly distributed. Although the orders look similar, we can start to see some differences, especially when either $x$ or $y$ values are smaller. Similar behavior can be seen when $r: (x,y) \to (x, (e^{-(x-1/2)^2}+y)/2)$.

The case where we start to see some more notable differences is when using $r: (x,y) \to (x, \frac{y}{1 + x^2})$, as when $x$ increases, the images are remarkably more skewed towards the $x$ axis. As the Nearest Neighbor Order takes into account the Euclidean distance to the Pareto front whilst the Global Rank one does not, we observe differences above all when points are assigned a lower rank.

The most significant difference lies in the fourth example, where more complicated rewards ($r(x,y) = ((\pi x) cos(\pi x),(\pi y) sin(\pi y)))$ demonstrate that the assigned values in $\hat{R}$ can be completely different from each other.

\begin{figure}
    \centering
    \begin{subfigure}[b]{0.49\textwidth}
        \includegraphics[width=\textwidth]{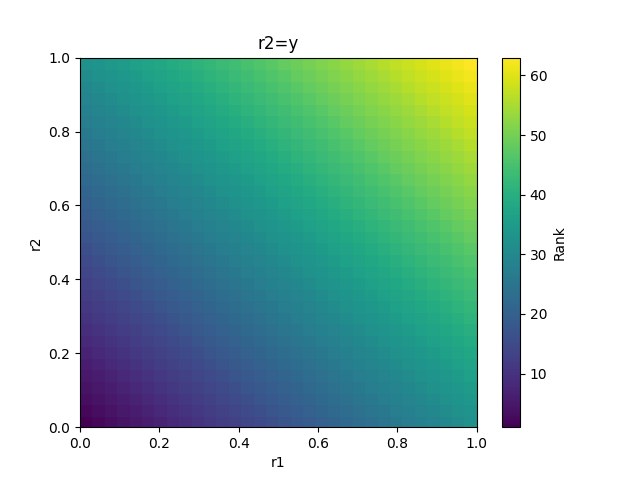}
        \label{fig:uniform}
    \end{subfigure}
     \begin{subfigure}[b]{0.49\textwidth}
        \includegraphics[width=\textwidth]{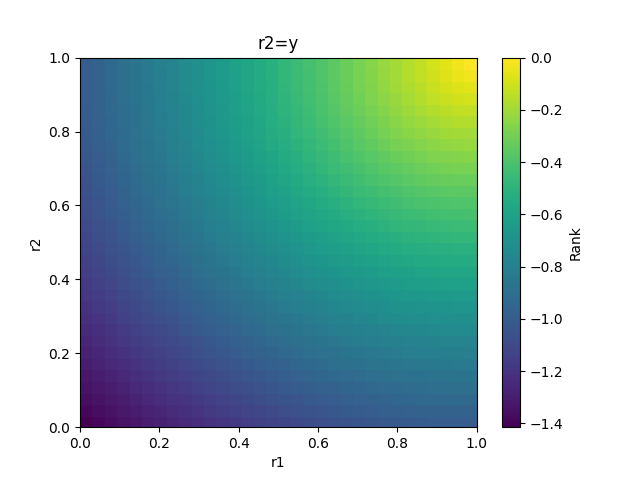}
        \label{fig:uniform}
    \end{subfigure}
    \hfill 
    \begin{subfigure}[b]{0.49\textwidth}
        \includegraphics[width=\textwidth]{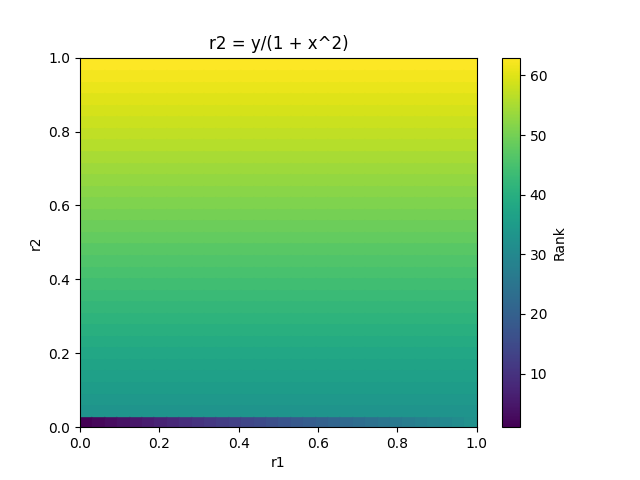}
        \label{fig:x_axis_exp}
    \end{subfigure}
    \begin{subfigure}[b]{0.49\textwidth}
        \includegraphics[width=\textwidth]{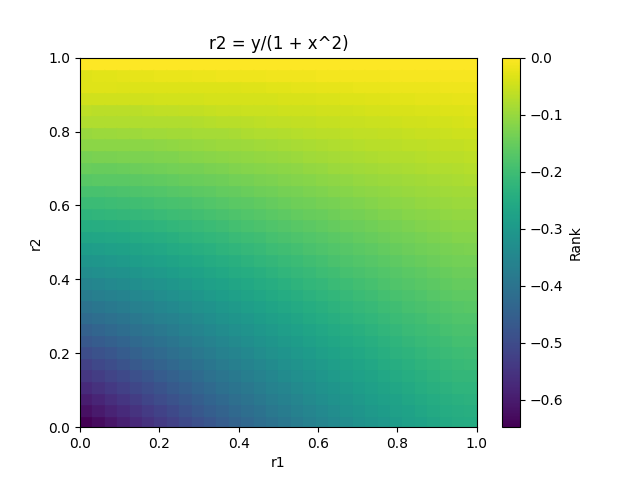}
        \label{fig:x_axis_exp}
    \end{subfigure}
    \hfill 

    \centering
    \begin{subfigure}[b]{0.49\textwidth}
        \includegraphics[width=\textwidth]{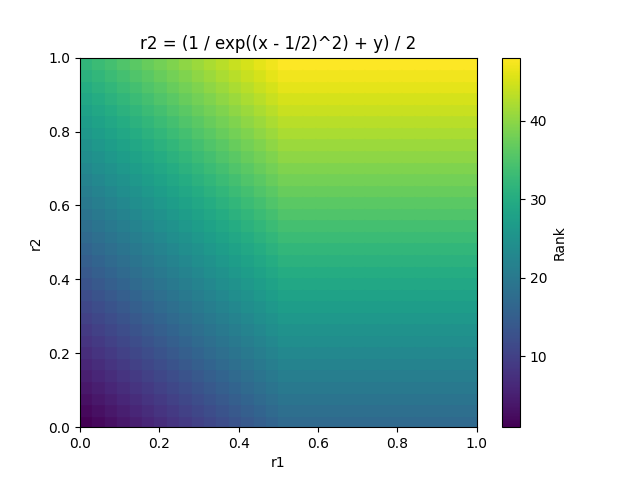}
        \label{fig:uniform}
    \end{subfigure}
    \begin{subfigure}[b]{0.49\textwidth}
        \includegraphics[width=\textwidth]{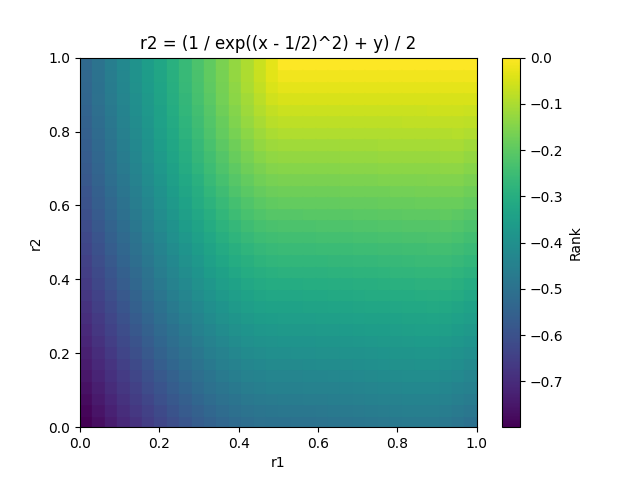}
        \label{fig:uniform}
    \end{subfigure}
    \hfill 
    \begin{subfigure}[b]{0.49\textwidth}
        \includegraphics[width=\textwidth]{examples_gr/cossin.png}
        \label{fig:x_axis_exp}
    \end{subfigure}
    \begin{subfigure}[b]{0.49\textwidth}
        \includegraphics[width=\textwidth]{examples_nn/cossin.png}
        \label{fig:x_axis_exp}
    \end{subfigure}
\caption{Comparison of the algorithms Global Rank (left) and Nearest Neighbor Order (right) with different rewards}
\label{fig:plot_comp}
\end{figure}
\newpage
\section{Evaluation Metrics}\label{sec:Metrics}

In evaluating the performance of our proposed algorithms, we utilize a range of metrics designed to capture various aspects of solution quality, diversity, and convergence to the true Pareto front. To define the metrics we are going to evaluate our algorithms with, we first denote the true Pareto front points $P$, the generated candidates by the network $S$, and finally $P$', the Pareto front of $S$. For some problems, the true Pareto front $P$ is not available or is unknown. However, since the functions are primarily normalized or bounded, we will use a discretization of the extreme faces of the hypercube as the reference set. The goal is to provide a comprehensive evaluation that accounts for both the closeness of generated solutions to the true Pareto front and the diversity of those solutions.

With the previous introduction, we introduce the following metrics:

\begin{itemize}
    \item \textbf{Inverted Generational Distance}:
    \[
    \text{IGD+}(S, P) := \frac{1}{|P|}\sum_{p\in P} (\text{min}_{s\in S} ||p - s|| ^2).
    \]
    This metric, introduced in \cite{GD+}, derives from previous metrics like the \textbf{Generational Distance} (GD) \cite{GD} and the \textbf{Inverted Generational Distance} (IGD) \cite{IGD}. GD calculates the average distance from each candidate $s\in S$ to the Pareto front $P$. IGD, in contrast, averages this quantity over the points of $P$ instead of the points of $S$. By focusing solely on the non-negative part of the distance, this method accounts for the directionality of improvements, avoiding penalties on solutions that exceed the Pareto front.

    \item \textbf{Averaged Hausdorff Distance (Plus)} is given by
    \[
    d_H^+(S,P) = \text{max}(\text{GD+}(S,P), \text{IGD+}(S,P)),
    \]
    where the \textbf{Generational Distance Plus} (GD+) is also introduced. Similar to IGD+, GD+ is defined as:
    \[
    \text{GD+} = \frac{1}{|S|}\sum_{s\in S} (\text{min}_{p\in P} ||p - s|| ^2).
    \]
    The standard Averaged Hausdorff Distance is 
    \[
    d_H(S,P)=\text{max}(\text{GD}(S,P), \text{IGD}(S,P)).
    \]

    \item \textbf{Hypervolume Indicator} (HV) \cite{HV} measures the quality of a set of solutions by computing the volume covered by the union of rectangles formed between the solutions provided by the network and a reference point. $\text{HV}(S,r)$ can be expressed as the Lebesgue measure of the union of rectangles formed by each point $s \in S$ and a reference point $r$, such that each rectangle is defined by the product of the intervals $[s_i, r_i]$ for $i \in \{1,...,d\}$.

    \item \textbf{Pareto-Clusters Entropy} (PC-ent) \cite{goal_cond_gflownets} metric focuses on the diversity of the generated (Pareto front) samples. Inspired by the formulation of entropy $H(X) = -\sum_{x\in X}x\log(x)$, this metric first creates clusters $P'_j$ with points in $P'$ based on distances to the reference Pareto front $P$. The PC-ent metric is defined as:
    \[
    \text{PC-ent}(P,P') = -\sum_{j}\frac{|P'_j|}{|P|}\log\frac{|P'_j|}{|P|}.
    \]

    \item \textbf{$\text{R}_2$ Indicator} \cite{r2} utilizes a set of uniformly distributed reference vectors alongside a utopian point \(z^*\). We introduce the Uniform Reference Vectors $\Lambda$, which collects uniformly distributed vectors across the objective space, capturing all its directions. The formula for the $\text{R}_2$ Indicator is:
    \[
    \text{R}_2(S,\Lambda, z^*) = \frac{1}{|\Lambda|} \sum_{\lambda \in \Lambda} \min_{s \in S} \max_{i \in \{1, \ldots, d\}} \{ \lambda_i \cdot |z^*_i - s_i| \}.
    \]

    \item \textbf{Pareto Coverage} is used when the true Pareto front $P$ is available, and it is defined as the proportion of the Pareto front that has been seen in $S$ (or equivalently in $P'$).

    \item \textbf{Samples in Front} is defined as the proportion of generated samples that are in the true Pareto front when the true Pareto front is available.

    \item \textbf{Top-k Divergence} quantifies the diversity among the best samples by measuring how different these samples are from each other, providing insights into the exploratory success of the sampling process in generating diverse high-quality candidates.
\end{itemize}
\section{Experiments}\label{appendix:experiments}
Across all this section we refer to our experiments in the tables with a "*". Our methods are Global-Rank GFlowNets (GR-GFNs), Trimmed Global-Rank GFlowNets when we assign a maximum rank $k$ (GR-GFNs ($k$)), Cheap Global-Rank GFlowNets (Cheap GR-GFNs), Nearest neighbor GFlowNets (NN-GFNs) and their version where we linearly interpolate the Pareto Front (NN-int-GFNs).
\subsection{HyperGrid}
\label{appendix:hypergrid}

We investigate the synthetic HyperGrid environment introduced in \cite{gflownet}. In this setup, states $S$ form a $d$-dimensional grid with side length $H$, where each state is defined as $\bigl\{(s_1, \dots, s_d) \mid s_i \in \{1, \dots, H\}, i \in \{1, \dots, d\}\bigr\}$. The environment allows actions that increment one coordinate without leaving the grid or stopping at a state. We evaluate four different reward functions: \texttt{branin}, \texttt{currin}, \texttt{shubert}, and \texttt{beale}, chosen for their diverse sparsity patterns in the Pareto front. Experiments are conducted with $d \in \{2, 3\}$ and $H = 32$.

The objective functions are defined as follows:
\[
\texttt{branin}(x_1, x_2) = a\left(x_2 - bx_1^2 + cx_1 - r\right)^2 + s(1-t)\cos(x_1) + s,
\]
where $a = 1, \quad b = \frac{5.1}{4\pi^2}, \quad c = \frac{5}{\pi}, \quad r = 6, \quad s = 10, \quad t = \frac{1}{8\pi}$ and $x_1, x_2$ are scaled as $x_1 \gets 15x_1 - 5$ and $x_2 \gets 15x_2$

\[
\texttt{currin}(x_1, x_2) = \left(1 - e^{-0.5x_2}\right) \cdot \frac{2300x_1^3 + 1900x_1^2 + 2092x_1 + 60}{13.77(100x_1^3 + 500x_1^2 + 4x_1 + 20)}
\]

\[
\texttt{shubert}(x_1, x_2) = \frac{\sum_{i=1}^{5} i \cos((i+1)x_1 + i) \sum_{i=1}^{5} i \cos((i+1)x_2 + i)}{397} + \frac{186.8}{397}
\]

\[
\texttt{beale}(x_1, x_2) = \frac{(1.5 - x_1 + x_1x_2)^2 + (2.25 - x_1 + x_1x_2^2)^2 + (2.625 - x_1 + x_1x_3^2)^2}{38.8}
\]

For $d = 2$, we utilize Global Rank GFlowNets (GR-GFNs) and Nearest Neighbor GFlowNets (NN-GFNs), including both basic and interpolated versions. For $d = 3$, we also employ Global Rank (Trimmed) GFlowNets with a maximum rank of 25. Comparisons are made against Preference-Conditional GFlowNets (PC-GFNs) and Order-Preserving GFlowNets (OP-GFNs).

The learned GFlowNet sampler is used to generate 1280 candidates for evaluation.

\subsubsection{Network Structure and Training Details}
As in the rest of experiments, we use the guidance of \cite{OP-Gfns}. The backward transition probability $P_B$ is set to be uniform across all states. The forward transition probability $P_F$ is parameterized by a 3-layer MLP, each with 64 hidden units and LeakyReLU as the activation function. The Adam optimizer is employed for training, using a learning rate of 0.01 and a batch size of 128. The model is trained over 1000 steps. Following guidance from \cite{moo_gfn}, for PC-GFNs the weight vector $w$ is drawn from a $\text{Dirichlet}(1.5)$ distribution, and $\beta$ is sampled from a $\Gamma(16, 1)$ distribution during training. For evaluation, $w$ is sampled from the same $\text{Dirichlet}(1.5)$ distribution, but $\beta$ is fixed at 16.

\subsubsection{Results}
We first present the results for $d=2$. Due to the low dimension, we can comfortably plot the results. First, we display the generated Pareto fronts in Figure \ref{fig:grid-paretofronts2d}, and then, we show how the different generated sample rewards are distributed in Figure \ref{fig:grid_values}. The results for the different metrics are summarized in Tables \ref{tab:grid_benchmark_results_2d}, \ref{tab:grid_benchmark_results}.

From Figure \ref{fig:grid_values} we can observe that GR-GFNs accomplish having most of the generated samples in the Pareto front, in contrast with NN-Gto NN-GFNs that, althoughg a lot of samples in the Pareto front, also havealso have manybserve that, especially for the case of the \texttt{shubert} functions, GR-GFNs generate much better results than the previous GFNs. We observe that in terms of $d_H$ and Samples in front, GR-GFNs greatly surpass the other methods, whilst in IGD+ and Pareto Coverage, NN-GFNs are the best ones.

Our methods, compared to OP-GFNs, are very similar in terms of Pareto coverage, except for the cases with \texttt{branin-currin} and \texttt{branin-shubert}, where the isolated point in the top-right corner is only achieved with the NN-GFNs (both basic and interpolated).

We now present the results for $d=3$ in Figure \ref{fig:grid_all_points_3d}. Even with the expansion of a dimension, we still see that our methods outperform OP-GFNs and PC-GFNs in most of the metrics, especially in Samples in front where we appreciate the greatest difference with respect to the rest in the case of GR-GFNs. In fact, only in the Pareto coverage of the \texttt{branin-shubert-beale} functions, OP-GFNs are slightly better than the rest.

\begin{figure}
    \centering
    \includegraphics[width=0.9\textwidth]{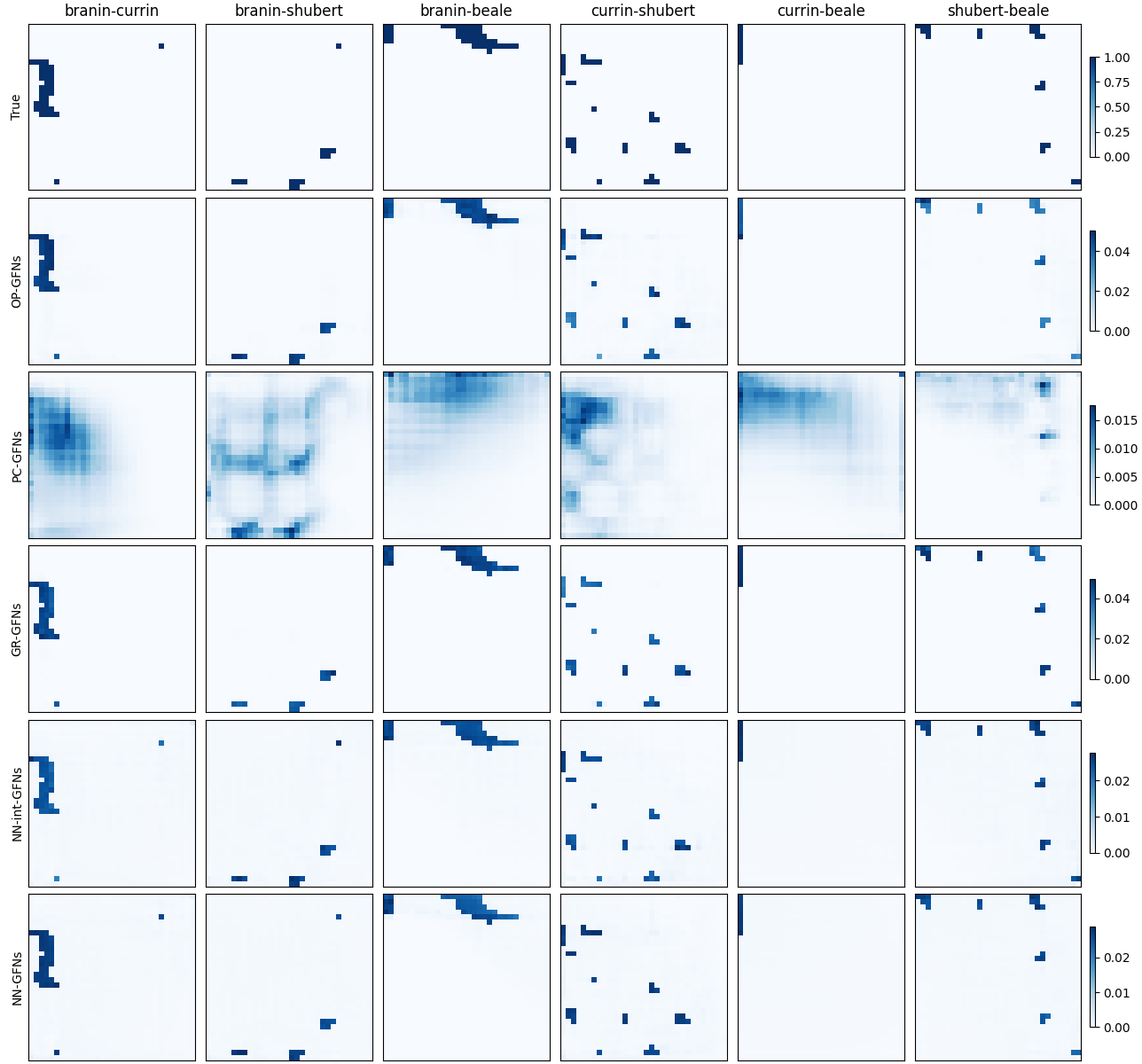}
    \caption{In the top row, we plot the indicator function of the true Pareto front. In the other rows, we plot the learned reward distribution of the different methods.}
    \label{fig:grid-paretofronts2d}
\end{figure}
\begin{figure}
    \centering
    \includegraphics[width=1\textwidth]{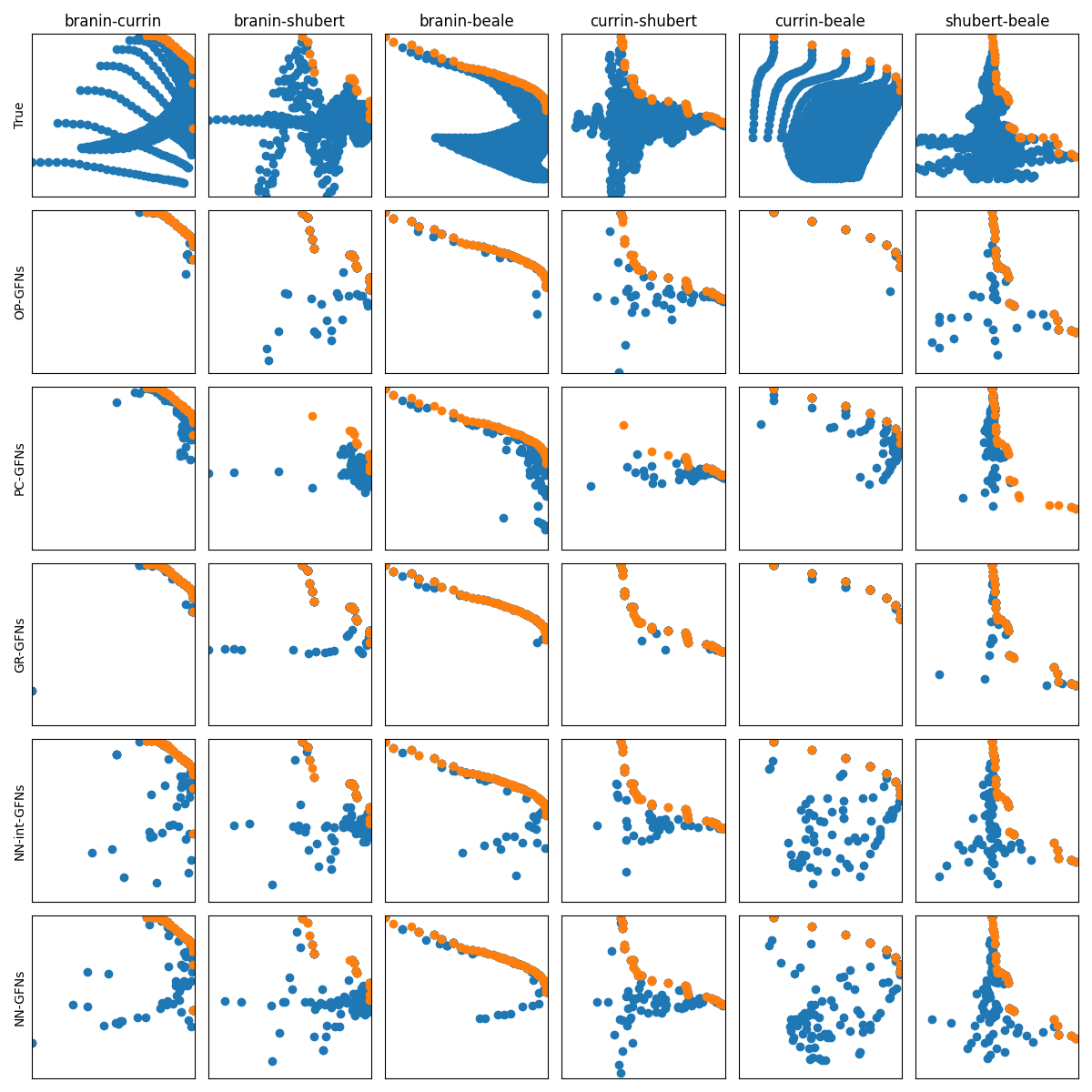}
    \caption{128 generated candidates (blue) and their respective Pareto front (orange). The first row, being the ground truth, is the image of all possible values of the discretized grid}
    \label{fig:grid_values}
\end{figure}
\begin{table}
\caption{HyperGrid Results for $d=2$}
\centering
\sisetup{
  table-format=2.2,
  detect-weight=true,
  detect-inline-weight=math
}
\begin{tabular}{
  l
  S[table-format=1.3]
  S[table-format=1.2e+2]
  S[table-format=1.3]
  S[table-format=1.3]
  S[table-format=1.3]
}
\toprule
{\textbf{Method}} & {d$_H(P',P)$ (\down)} & {IGD+ (\down)} & {PC-ent (\up)} & {\% front (\up)} & {Coverage (\up)} \\
\midrule
\textbf{branin-currin} \\
\midrule
OP-GFNs           & 0.014 & 2.66e-06 & 3.555 & 0.889 & 0.972 \\
PC-GFNs           & 0.041 & 2.66e-06 & 3.555 & 0.251 & 0.972 \\
GR-GFNs*           & \textbf{0.008} & 2.66e-06 & 3.555 & \textbf{0.935} & 0.972 \\
NN-int-GFNs*       & 0.06  & \textbf{1.18e-08} & \textbf{3.584} & 0.675 & \textbf{1} \\
NN-GFNs*           & 0.063 & \textbf{1.18e-08} & \textbf{3.584} & 0.669 & \textbf{1} \\
\midrule
\textbf{branin-shubert} \\
\midrule
OP-GFNs           & 0.054 & 6.82e-06 & 2.565 & 0.736 & 0.929 \\
PC-GFNs           & 0.078 & 2.60e-02 & 2.303 & 0.052 & 0.714 \\
GR-GFNs*          & \textbf{0.028} & 6.82e-06 & 2.565 & \textbf{0.823} & 0.929 \\
NN-int-GFNs*    & 0.098 & \textbf{9.00e-09} & \textbf{2.639} & 0.402 & \textbf{1} \\
NN-GFNs*           & 0.095 & \textbf{9.00e-09} & \textbf{2.639} & 0.412 & \textbf{1} \\
\midrule
\textbf{branin-beale} \\
\midrule
OP-GFNs           & 0.005 & \textbf{1.38e-08} & \textbf{3.784} & 0.899 & \textbf{1} \\
PC-GFNs           & 0.065 & \textbf{1.38e-08} & \textbf{3.784} & 0.225 & \textbf{1} \\
GR-GFNs*           & \textbf{0.002} & \textbf{1.38e-08} & \textbf{3.784} & \textbf{0.95}  & \textbf{1} \\
NN-int-GFNs*       & 0.031 & \textbf{1.38e-08} & \textbf{3.784} & 0.784 & \textbf{1} \\
NN-GFNs*           & 0.018 & \textbf{1.38e-08} & \textbf{3.784}& 0.832 & \textbf{1} \\

\midrule
\textbf{currin-shubert} \\
\midrule
OP-GFNs           & 0.036 & \textbf{1.29e-08} & \textbf{3.466} & 0.727 & \textbf{0.941} \\
PC-GFNs           & 0.038 & 3.60e-02 & 3.000 & 0.127 & 0.529 \\
GR-GFNs*           & \textbf{0.010} & \textbf{1.29e-08} & \textbf{3.466} & \textbf{0.889} & \textbf{0.941} \\
NN-int-GFNs*       & 0.049 & \textbf{1.29e-08} & \textbf{3.466} & 0.63  & \textbf{0.941} \\
NN-GFNs*           & 0.055 & \textbf{1.29e-08} & \textbf{3.466} & 0.601 & \textbf{0.941} \\
\midrule
\textbf{currin-beale} \\
\midrule
OP-GFNs           & 0.002 & \textbf{1.57e-08} & \textbf{2.079} & \textbf{0.977} & \textbf{1} \\
PC-GFNs           & 0.078 &\textbf{1.57e-08} & \textbf{2.079} & 0.203 & \textbf{1} \\
GR-GFNs*           & \textbf{0.001} & \textbf{1.57e-08} & \textbf{2.079} & 0.974 & \textbf{1} \\
NN-int-GFNs*       & 0.023 & \textbf{1.57e-08} & \textbf{2.079} & 0.434 & \textbf{1} \\
NN-GFNs*           & 0.259 & \textbf{1.57e-08} & \textbf{2.079} & 0.379 & \textbf{1} \\

\midrule
\textbf{shubert-beale} \\
\midrule
OP-GFNs           & 0.038 & \textbf{7.10e-09} & \textbf{3.091} & 0.757 & \textbf{0.733} \\
PC-GFNs           & 0.033 & 1.70e-02 & 3.015 & 0.362 & 0.667 \\
GR-GFNs*           & \textbf{0.008} & \textbf{7.10e-09} & \textbf{3.091} & \textbf{0.889} & \textbf{0.733} \\
NN-int-GFNs*       & 0.067 & \textbf{7.10e-09} & \textbf{3.091} & 0.523 & \textbf{0.733} \\
NN-GFNs*          & 0.068 & \textbf{7.10e-09} & \textbf{3.091} & 0.521 & \textbf{0.733} \\
\bottomrule
\end{tabular}

\label{tab:grid_benchmark_results_2d}
\end{table}
\begin{table}
\caption{HyperGrid results for $d=3$}
\centering
\sisetup{
  table-format=2.2,
  detect-weight=true,
  detect-inline-weight=math
}
\begin{tabular}{
  l
  S[table-format=1.3]
  S[table-format=1.2e+2]
  S[table-format=1.3]
  S[table-format=1.3]
  S[table-format=1.3]
}
\toprule
{\textbf{Method}} & {d$_H(P',P)$ (\down)} & {IGD+ (\down)} & {PC-ent (\up)} & {\% front (\up)} & {Coverage (\up)} \\
\midrule
\textbf{branin-currin-shubert} \\
\midrule
OP-GFNs           & 0.025 & 9.07e-07 & 4.630  & 0.786 & \textbf{0.991} \\
PC-GFNs           & 0.055 & 0.0023   & 4.510  & 0.331 & 0.860  \\
GR-GFNs*           & 0.014 & 9.07e-07 & 4.630  & 0.907 & 0.963 \\
GR-GFNs (25)*      & \textbf{0.013} & 9.07e-07 & 4.630  & \textbf{0.929} & \textbf{0.991} \\
NN-int-GFNs*       & 0.056 & \textbf{1.61e-08} & \textbf{4.640}  & 0.64  & \textbf{0.991} \\
NN-GFNs*           & 0.025 & 9.071e-07 & 4.630  & 0.827 & 0.963 \\
\midrule
\textbf{branin-currin-beale} \\
\midrule
OP-GFNs           & 0.017 & 2.00e-04 & 5.411 & 0.746 & 0.978 \\
PC-GFNs           & 0.017 & 8.00e-04 & 5.305 & 0.723 & 0.882 \\
GR-GFNs*           & \textbf{0.003} & \textbf{1.64e-05} & \textbf{5.425} & \textbf{0.969} & \textbf{0.991} \\
GR-GFNs (25)*      & \textbf{0.003} & 6.46e-05 & \textbf{5.425} & 0.963 & \textbf{0.991} \\
NN-int-GFNs*       & 0.011 & 1.00e-04 & \textbf{5.425} & 0.860 & \textbf{0.991} \\
NN-GFNs*           & 0.011 & 1.00e-04 & 5.421 & 0.934 & 0.987 \\

\midrule
\textbf{branin-shubert-beale} \\
\midrule
OP-GFNs           & 0.024 & 2.00e-04 & 4.984 & 0.685 & \textbf{0.987} \\
PC-GFNs           & 0.041 & 2.80e-02 & 4.650 & 0.355 & 0.686 \\
GR-GFNs*           & 0.010 & \textbf{1.45e-08} & \textbf{4.990} & 0.896 & 0.967 \\
GR-GFNs (25)*      & \textbf{0.004} & \textbf{1.45e-08} & \textbf{4.990} & \textbf{0.928} & 0.967 \\
NN-int-GFNs*       & 0.028 & \textbf{1.45e-08} & \textbf{4.990} & 0.704 & 0.967 \\
NN-GFNs*           & 0.024 & 1.00e-04 & 4.984 & 0.817 & 0.961 \\

\midrule
\textbf{currin-shubert-beale} \\
\midrule
OP-GFNs           & 0.023 &  \textbf{1.41e-08} & \textbf{4.844} & 0.690 & \textbf{1} \\
PC-GFNs           & 0.037 & 4.00e-02 & 4.025 & 0.340 & 0.411 \\
GR-GFNs*           & \textbf{0.009} &  \textbf{1.41e-08} & \textbf{4.844} & 0.902 & 0.992 \\
GR-GFNs (25)*      & \textbf{0.009} &  \textbf{1.41e-08} & \textbf{4.844} & \textbf{0.908} & 0.992 \\
NN-int-GFNs*       & 0.035 &  \textbf{1.41e-08} & \textbf{4.844} & 0.673 & \textbf{1} \\
NN-GFNs*           & 0.020 &  \textbf{1.41e-08} & \textbf{4.844} & 0.830 & 0.992 \\
\bottomrule
\end{tabular}

\label{tab:grid_benchmark_results}
\end{table}

\begin{figure}
    \centering
    \includegraphics[width=1\textwidth]{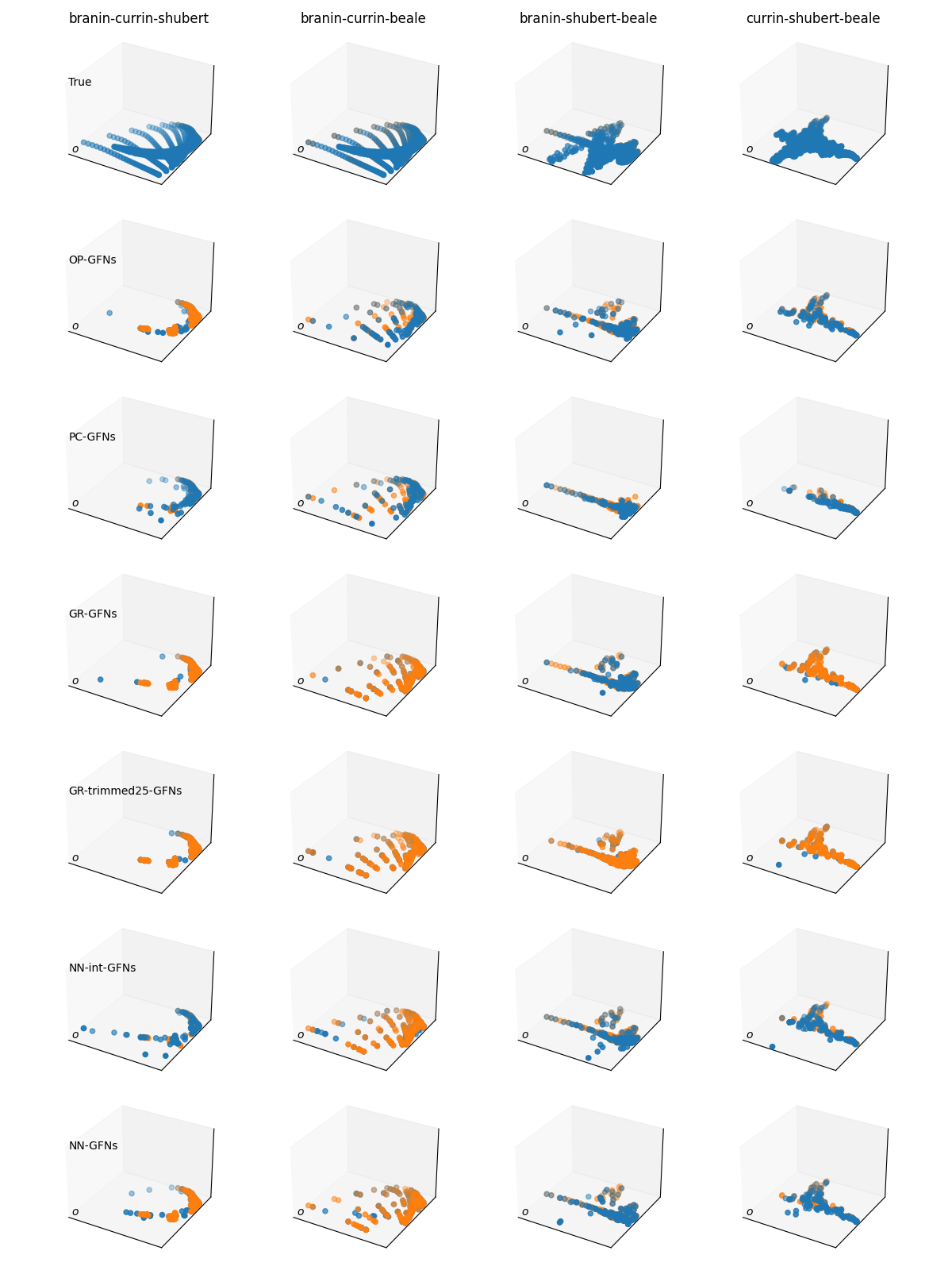}
    \caption{Results for $d=3$}
    \label{fig:grid_all_points_3d}
\end{figure}
\clearpage
\subsection{N-Grams}\label{appendix:ngrams}
This next synthetic benchmark is called N-Grams, and it was proposed by \cite{ngrams}. The goal here is to generate strings of a maximum length $L$ that are rewarded better if they have more occurrences of some given substrings, in this case individual letters (unigrams) or pairs (bigrams). We consider the objectives shown in Table \ref{tab:ngrams_objectives}. For the unigrams case, we set $L=18$ and for the bigrams, $L=36$.
\begin{table}[h]
\caption{Objectives considered for the n-grams task}

\centering
\begin{tabular}{|c l l|}
\hline
\# Objectives & Unigrams & Bigrams \\
\hline
2 & A, C & AC, CV \\
3 & A, C, V & AC, CV, VA \\
4 & A, C, V, W & AC, CV, VA, AW \\
\hline
\end{tabular}
\label{tab:ngrams_objectives}
\end{table}
In this benchmark, we saw that NN-GFNs were already underperforming compared to the other three main algorithms, and therefore, we leave it out in this study. Hence we compare OP-GFNs, PC-GFNs and the Cheap-GR-GFNs, with replay buffer of capacity 10000 and warm-up 1000.
\subsubsection{Network Structure and Training Details}
As in the previous benchmark, the backward transition probability \(P_B\) is configured to be uniform, whereas the forward transition probability $P_F$ is modeled using a Transformer-based encoder. This encoder is implemented featuring three hidden layers each with a dimension of 64 and utilizing eight attention heads for the embedding of the current state \(s\). It is characterized by being unidirectional with no dropout. For the PC-GFNs, as indicated in \cite{moo_gfn,OP-Gfns}, the preference is encoded using Dir(1), 50 bins, and the exponent $\beta$ for the reward being 96. We use Adam optimizer. In this case, with learning rates $10^{-4}$ and $10^{-3}$ for $P_F$ and $Z$ respectively.
\subsubsection{Results}
After training the network, we generate 1280 candidates and the results are summarized in Table \ref{tab:ngrams-experiment-results}, where the $k$ in the Top $k$-diversity is 10. We observe that in this particular case the two algorithms that perform better are PC-GFNs and Cheap-GR-GFNs. Except for the case of 2 unigrams (PC-GFNs and Cheap-GR-GFNs perform similarly) and the 2 and 3 bigrams (PC-GFNs perform better), we observe our method to outperform the others. We remark that our methods, even having similar results for two and three objectives, stand out with 4 objectives.
\begin{table}[h]
\caption{N-Grams Results}
\centering
\sisetup{
  table-format=2.2,
  detect-weight=true,
  detect-inline-weight=math
}
\begin{tabular}{
  l
  S[table-format=1.2]
  S[table-format=2.2]
  S[table-format=1.2]
  S[table-format=1.2]
  S[table-format=2.2]
}
\toprule
\textbf{REGEX} & {HV (\up)} & {R$_2$ (\down)} & {PC-ent (\up)} & {d$_H(P',P)$ (\down)} & {Diversity (\up)} \\
\midrule

\multicolumn{6}{l}{\textbf{2 Unigrams}} \\
\midrule
OP-GFNs           &\textbf{0.47}&1.46&2.25&0.34&7.17\\
PC-GFNs           &\textbf{0.47} &\textbf{\;\;1.45}  & \textbf{2.26} & \textbf{0.33} & 3.40  \\
Cheap-GR-GFNs* & \textbf{0.47} &\textbf{\;\;1.45}  & \textbf{2.26} & \textbf{0.33} & \textbf{\;\;8.22} \\
\midrule
\multicolumn{6}{l}{\textbf{2 Bigrams}} \\
\midrule
OP-GFNs           & 0.53 & 1.66  & 1.90 & 0.34 & 5.14 \\
PC-GFNs           & 0.52 & \textbf{\;\;1.42}  & \textbf{2.15} & \textbf{0.30} & \textbf{14.99} \\
Cheap-GR-GFNs* & \textbf{0.57} & 1.50  & 2.08 & 0.31 & 5.76 \\
\midrule
\multicolumn{6}{l}{\textbf{3 Unigrams}} \\
\midrule
OP-GFNs           & 0.13 & 11.07 & 3.69 & 0.59 & 9.74 \\
PC-GFNs           & 0.04 & 10.30  & 2.99 & 0.60 & 5.22 \\
Cheap-GR-GFNs* & \textbf{0.14} & \textbf{9.53}  & \textbf{3.88} & \textbf{0.57} & \textbf{\;\;9.87} \\
\midrule
\multicolumn{6}{l}{\textbf{3 Bigrams}} \\
\midrule
OP-GFNs           & 0.30  & 10.32 & 1.10  & 0.56 & 1.08 \\
PC-GFNs           & 0.32 & \textbf{\;\;8.38}  & \textbf{2.25} & \textbf{0.42} & \textbf{13.98} \\
Cheap-GR-GFNs* & \textbf{0.33} & 9.21  & 2.20  & 0.49 & 10.15 \\
\midrule
\multicolumn{6}{l}{\textbf{4 Unigrams}} \\
\midrule
OP-GFNs           & \textbf{0.03} & 53.79 & 4.78 & 0.79 & 11.30 \\
PC-GFNs           & 0.01 & 49.77 & 3.36 & 0.79 & 4.49 \\
Cheap-GR-GFNs* & \textbf{0.03} & \textbf{45.54} & \textbf{5.07} & \textbf{0.76} & \textbf{11.44} \\
\midrule
\multicolumn{6}{l}{\textbf{4 Bigrams}} \\
\midrule
OP-GFNs           & 0.06 & 50.18 & 3.89 & 0.68 & \textbf{16.88} \\
PC-GFNs           & 0.05 & 48.09 & 3.90  & 0.67 & 15.13 \\
Cheap-GR-GFNs* & \textbf{0.09} & \textbf{39.94} & \textbf{4.52} & \textbf{0.60}  & 12.79 \\
\bottomrule
\end{tabular}

\label{tab:ngrams-experiment-results}
\end{table}
\newpage
\subsection{DNA Sequence Generation}\label{appendix:dna}
\subsubsection{Objective Functions}
With a fixed length of 30 elements, we can compute several rewards:
\begin{itemize}
    \item \texttt{energy}: Free energy of the secondary structure computed with NUPACK \cite{Nupack}
    \item \texttt{pairs}: Number of base pairs
    \item \texttt{pins}: DNA hairpin index
\end{itemize}
Due to a time limitation we could only evaluate \texttt{energy-pins} and \texttt{energy-pins-pairs}, and it is left for the future to evaluate the other combinations of objective functions.
\subsubsection{Network Structure and Training Details}
We are going to compare ourselves with the same algorithms as before and also the same parameters as the N-Grams task, due to the similar nature of the problem. Following the advice of \cite{moo_gfn, OP-Gfns}, we now set $\beta = 80$ for PC-GFNs. Network architectures and training setup are identical to the previous experiment.

\subsection{Fragment-Based Molecule Generation}\label{appendix:frag}
\subsubsection{Objective Functions}
We use the same rewards adopted in the previous works \cite{moo_gfn, OP-Gfns, goal_cond_gflownets}:
\begin{itemize}
    \item \texttt{SEH}: A pretrained model acts as a proxy to predict the binding energy of a molecule to the soluble epoxide hydrolase,  closely related to the Alzheimer's treatment \cite{alzheimer}.
    \item \texttt{QED}: A measure for a drug's likeness \cite{drug_likeness}.
    \item \texttt{SA}: Synthetic Accessibility \cite{SA}. SA is extracted from RDKit \cite{RDKit}, and the final reward is $\text{R}_{SA} = (10 - \texttt{SA})/9$.
    \item \texttt{MW}: Molecular Weight, in this case a region that favors weights under 300: $\text{R}_{MW} = ((300 - \texttt{MW}) / 700 + 1).\text{clip}(0, 1)$
\end{itemize}
\subsubsection{Network Structure and Training Details}
As usual, $P_B$ is not parameterized but set as uniform. The novelty lies in $P_F$, which is now modeled by a Graph Neural Network (GNN) based on the graph transformer architecture \cite{gfn}. It has two layers with node embedding size of 64. As indicated in \cite{OP-Gfns}, the preference vector $w$ is represented using thermometer encoding with 16 bins, while the temperature $ \beta$ is similarly encoded but with 32 bins. The training details are very similar to previous sections, except for the fact that GR-GFNs are trained in 10000 steps instead of 20000. Due to limited computational resources, we set 30 the maximum rank in GR-GFNs.
\subsubsection{Ablation study}
In training with a replay buffer, there is a question of how to balance samples on the Pareto front and general samples. To address this, we conduct an experiment utilizing the \texttt{SEH-QED} objective functions, examining sample ratios of 0.1, 0.2, and 0.4. The outcomes of this investigation are shown in Figure \ref{fig:pareto_sizes}.
Our analysis indicates that setting the Pareto ratio to 0.1 results in generated samples that are highly aligned with the desired optimization objectives, closely resembling the ideal Pareto front. Conversely, setting the ratio to 0.4 yields less favorable outcomes. Specifically, this higher ratio appears to constrict the network's exploratory capabilities, leading to premature convergence on previously recognized Pareto front points during the initial stages of training (which, of course, are worse than the ideal Pareto front). This observation emphasizes the importance of carefully selecting the sample ratio to balance effective exploration with optimal convergence.
\begin{figure}[h!]
    \centering
    \includegraphics[width=0.4\textwidth]{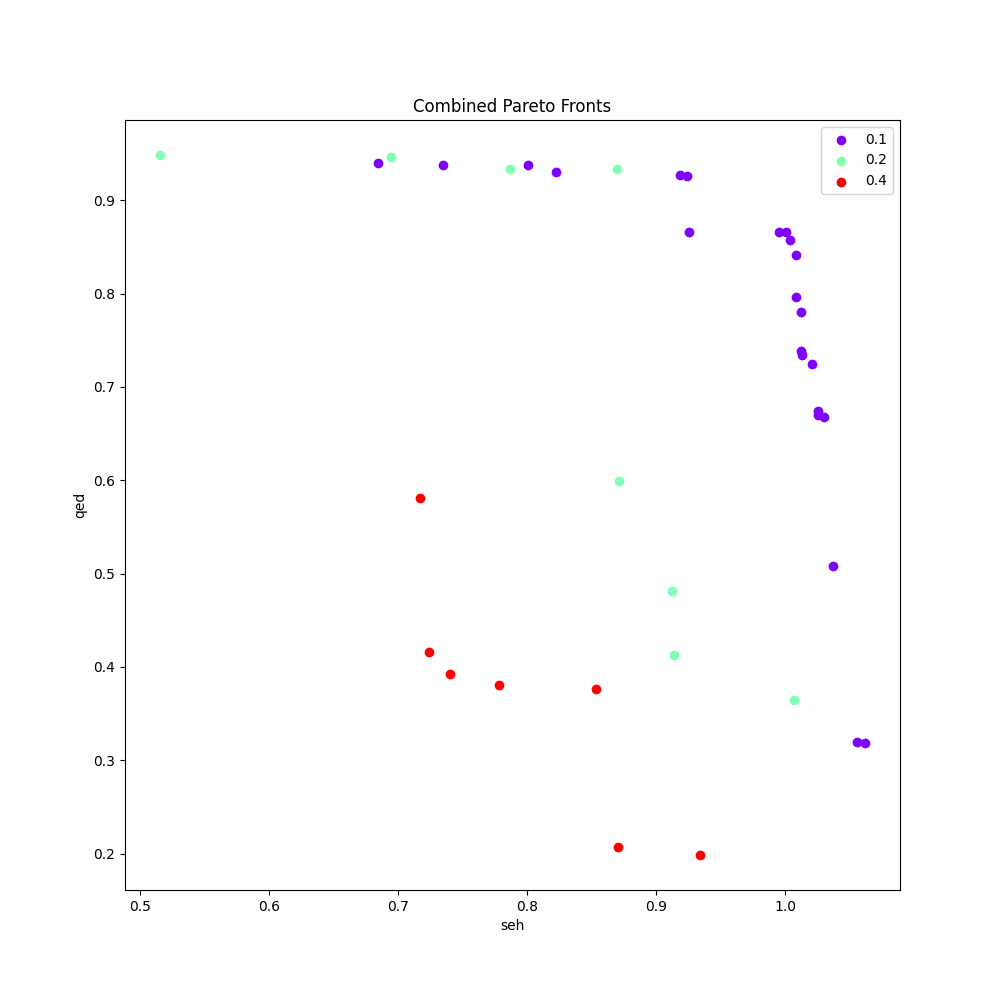}
    \caption{Ablation study of the Pareto ratio sizes}
    \label{fig:pareto_sizes}
\end{figure}
\subsubsection{Plots and Tables}
\begin{figure}[h]
    \centering
    \includegraphics[width=1\textwidth]{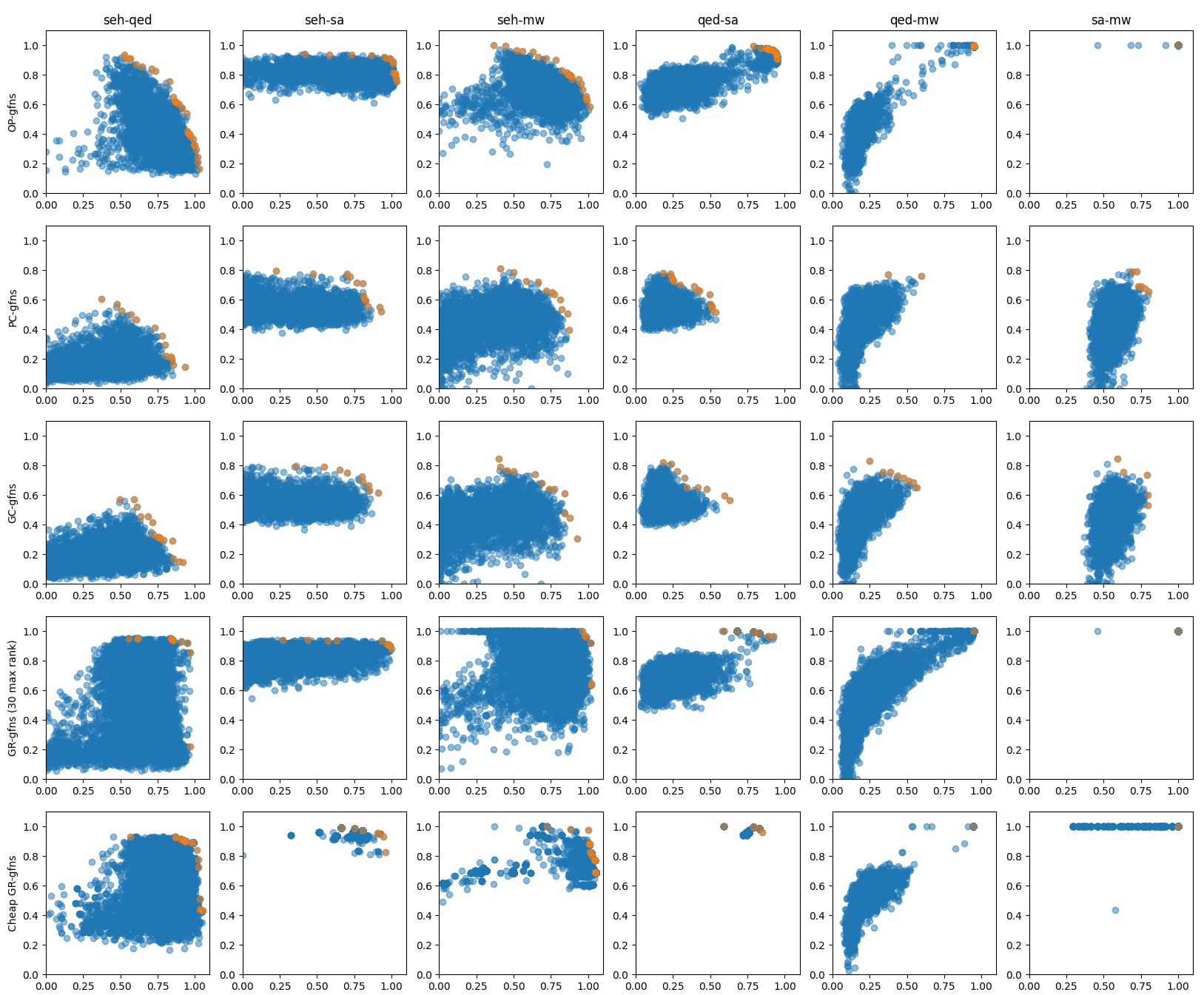}
    \caption{Results for the fragmentation-based molecule generation: generated samples (blue) and the Pareto fronts (orange)}
    \label{fig:seh_plots}
\end{figure}
\begin{figure}
    \centering
    \begin{subfigure}[b]{0.49\textwidth}
        \includegraphics[width=\textwidth]{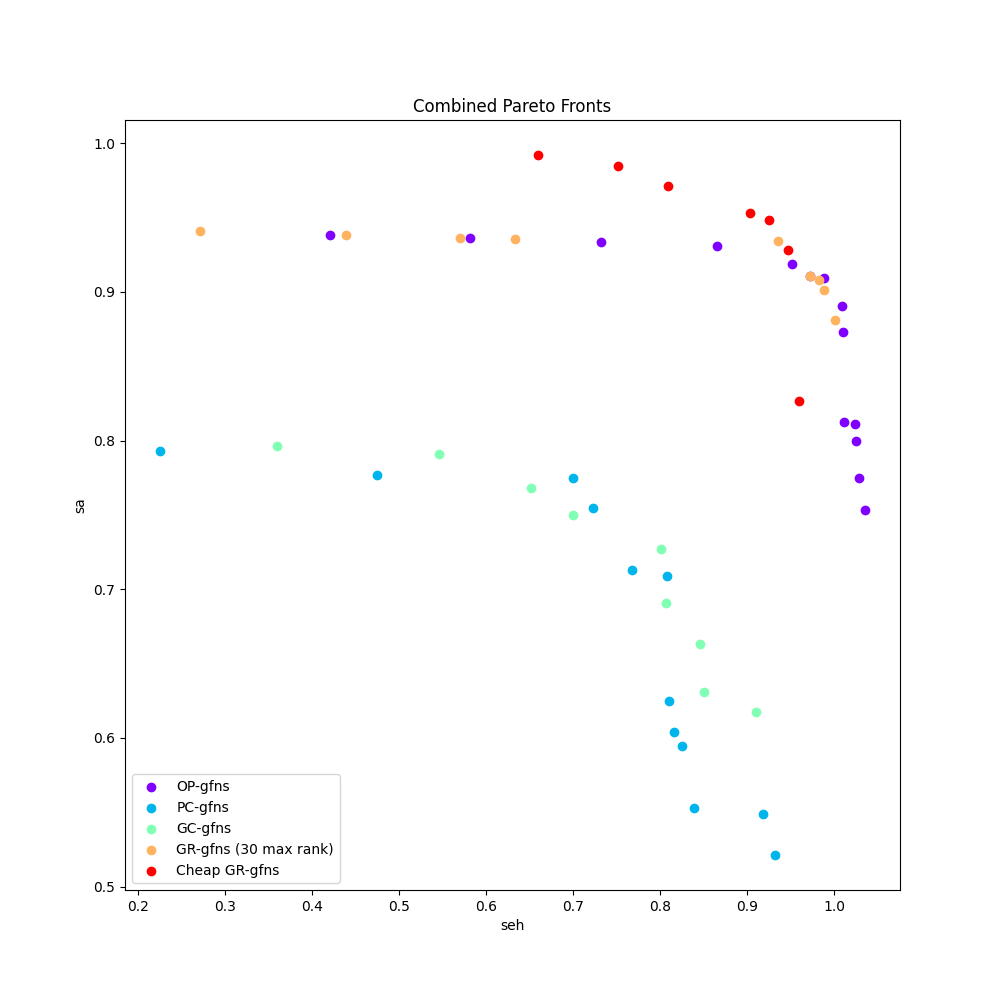}
        \label{fig:uniform}
    \end{subfigure}
     \begin{subfigure}[b]{0.49\textwidth}
        \includegraphics[width=\textwidth]{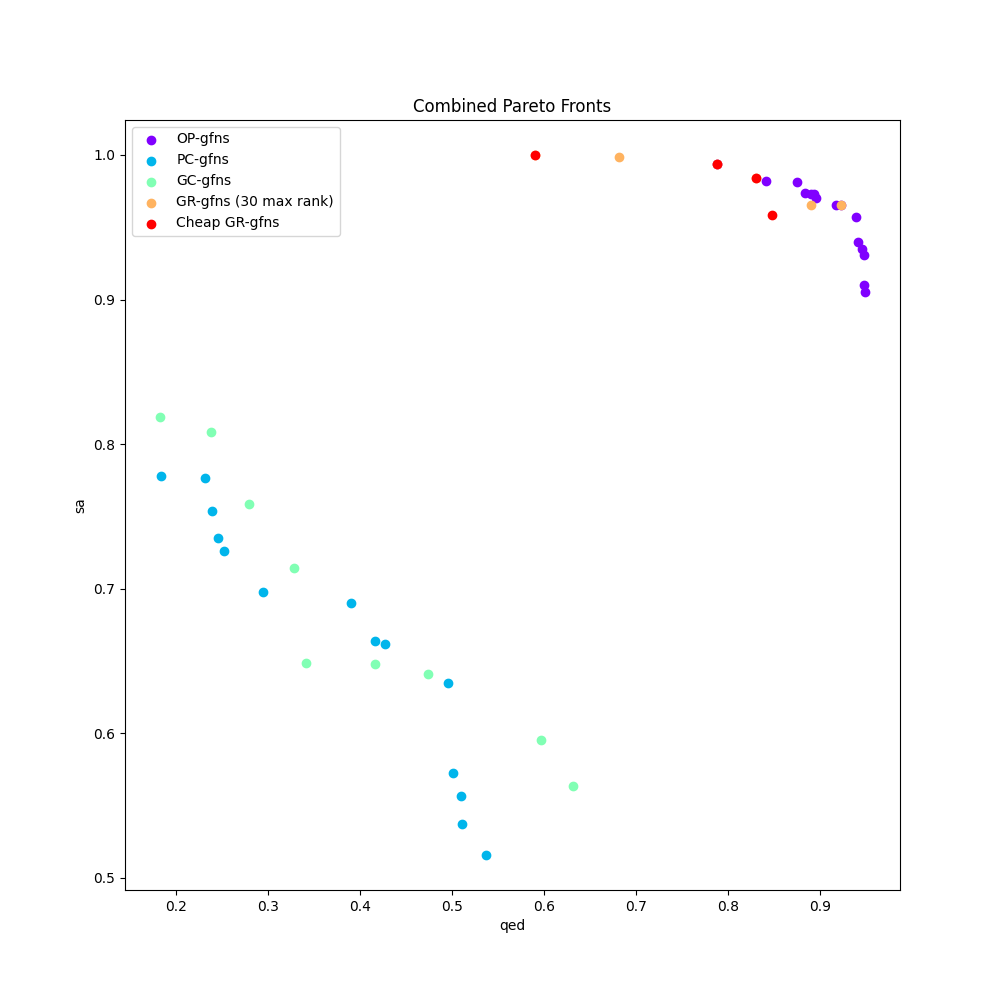}
        \label{fig:uniform}
    \end{subfigure}
    \hfill 
    \begin{subfigure}[b]{0.49\textwidth}
        \includegraphics[width=\textwidth]{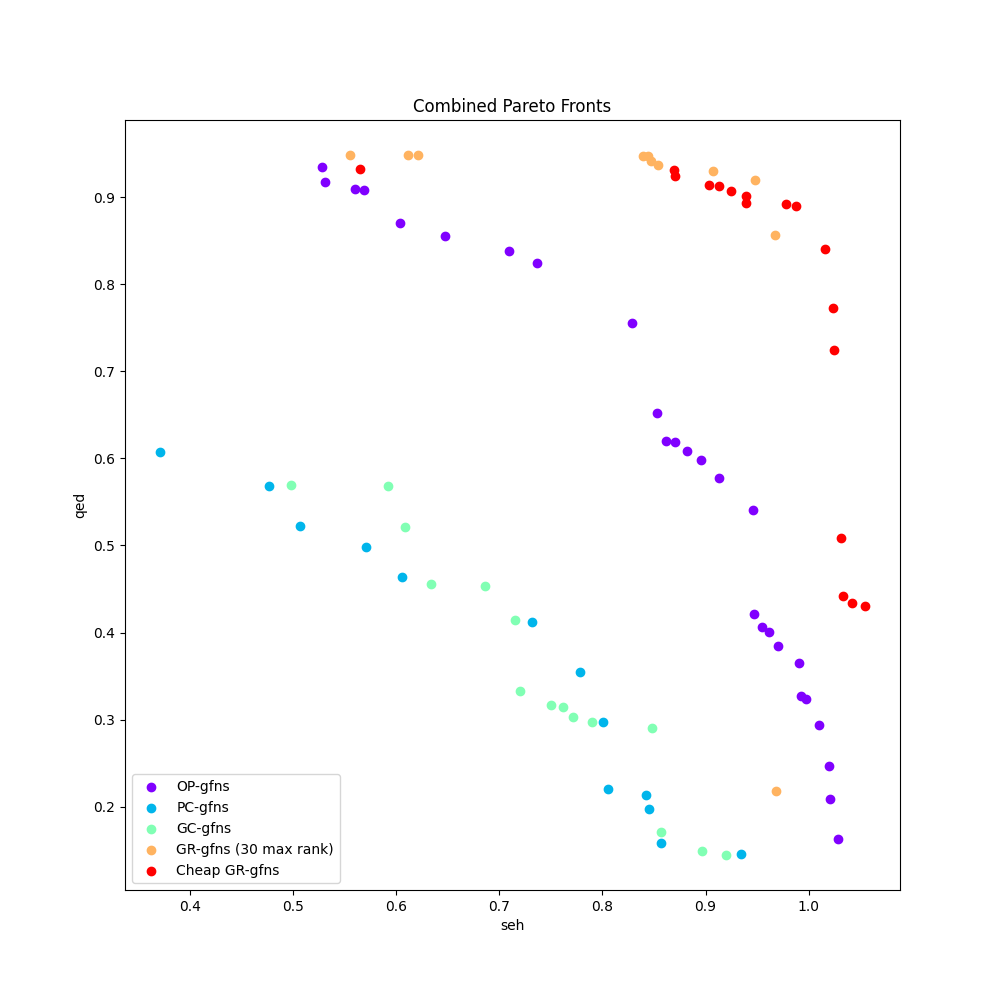}
        \label{fig:uniform}
    \end{subfigure}
    \begin{subfigure}[b]{0.49\textwidth}
        \includegraphics[width=\textwidth]{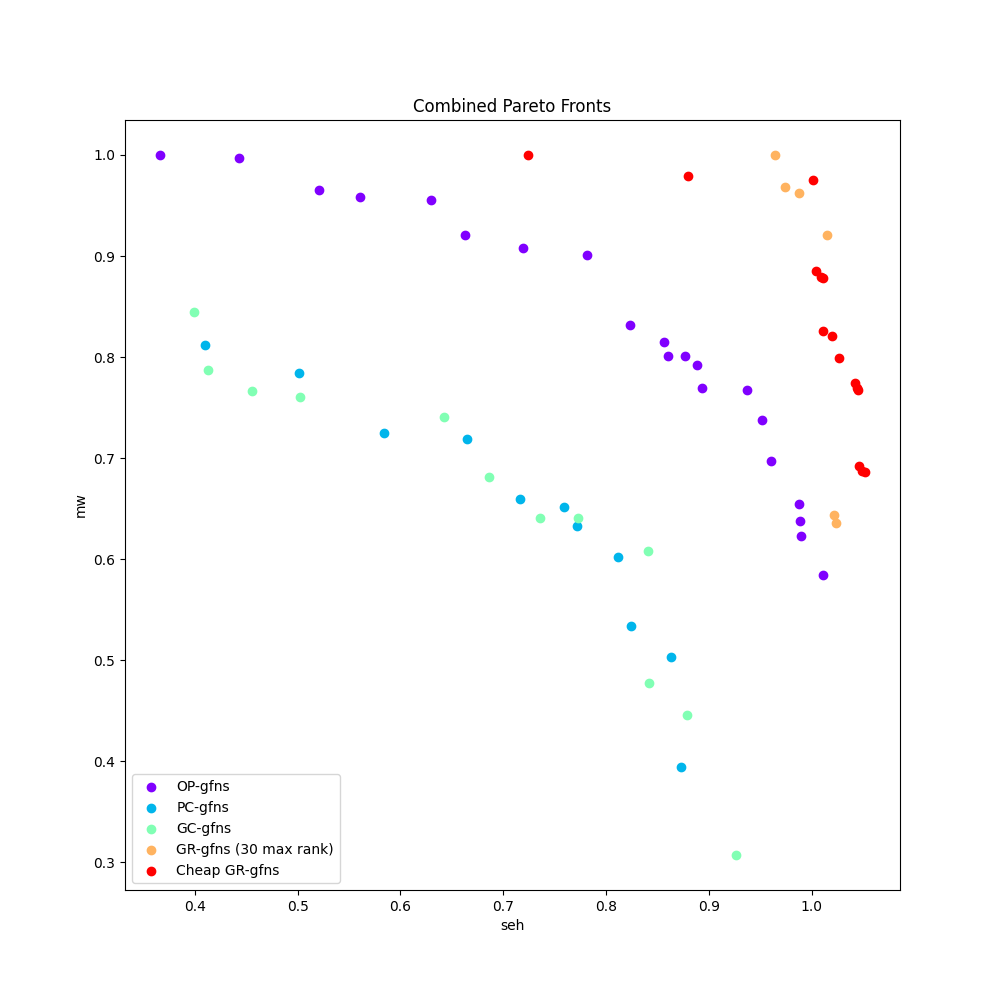}
        \label{fig:uniform}
    \end{subfigure}
    \hfill 
    \centering
     \begin{subfigure}[b]{0.49\textwidth}
        \includegraphics[width=\textwidth]{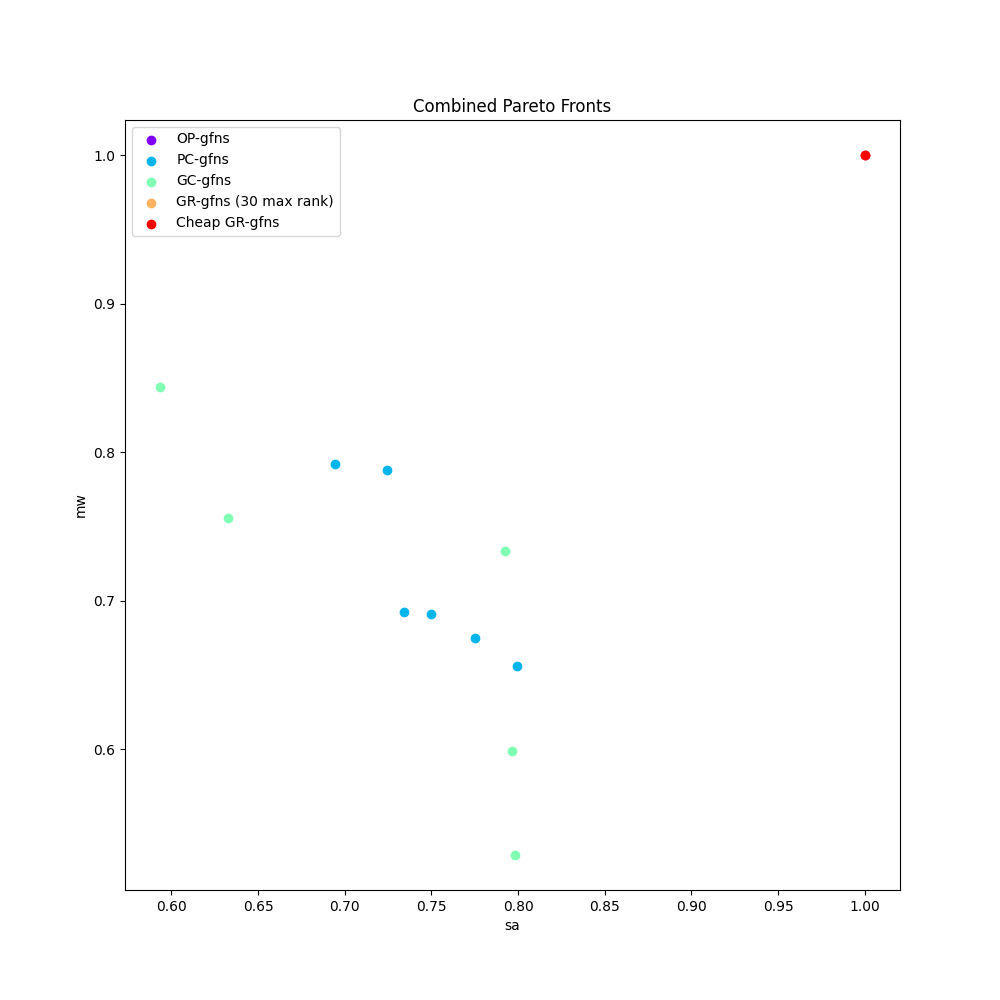}
        \label{fig:uniform}
    \end{subfigure}
    \begin{subfigure}[b]{0.49\textwidth}
        \includegraphics[width=\textwidth]{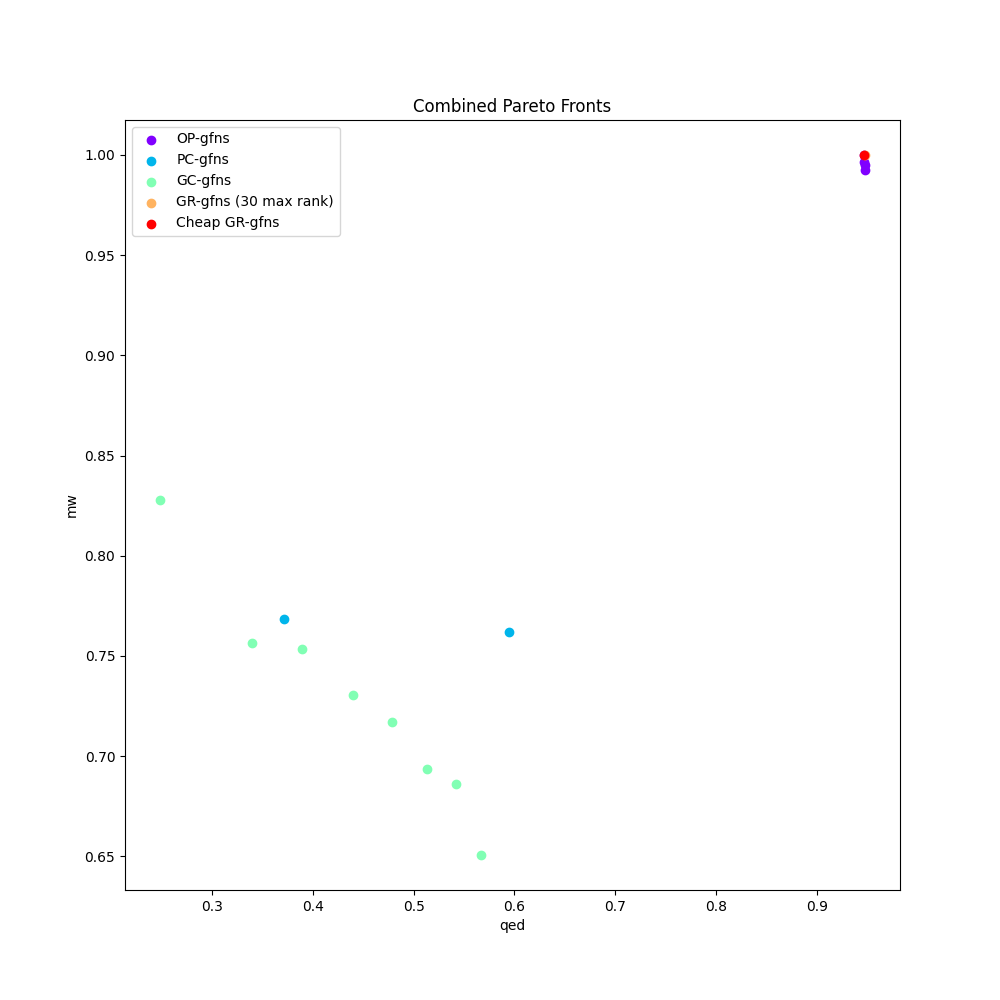}
        \label{fig:uniform}
    \end{subfigure}
\caption{Fragmentation-based molecule generation: Comparison of the different Pareto fronts provided by each method. }
\label{fig:seh_pareto}
\end{figure}
\begin{table}
\caption{Results for Fragmentation-Based Molecule Generation benchmark}
\centering
\sisetup{
  table-format=2.2,
  detect-weight=true,
  detect-inline-weight=math
}
\begin{tabular}{
  l
  S[table-format=1.2]
  S[table-format=1.2]
  S[table-format=1.2]
  S[table-format=1.2]
  S[table-format=1]
}
\toprule
{\textbf{SEH-QED}} & {PC-ent (\up)} & {IGD+ (\down)} & {d$_H(P',P)$ (\down)} & {R$_2$ (\down)} &  {Non-dominated (\up)} \\
\midrule
OP-GFNs          & \textbf{2.12} & 0.22 & \textbf{0.1819} & 7.7   & 0 \\
PC-GFNs          & 1.88 & 0.43 & 0.36   & 13.277 & 0 \\
GC-GFNs          & 1.59 & 0.44 & 0.38   & 13.35  & 0 \\
GR-GFNs (30)*     & 1.37 & 0.21 & 0.33   & \textbf{3.82} & \textbf{9} \\
Cheap GR-GFNs*    & 1.76 & \textbf{0.19} & 0.23   & 4.39   & 1 \\
\midrule
\textbf{SEH-SA} \\
\midrule
OP-GFNs          & \textbf{1.73} & \textbf{0.25} & 0.12   & \textbf{3.27}   & \textbf{9} \\
PC-GFNs          & 1.54 & 0.29 & 0.24   & 10.89  & 0 \\
GC-GFNs          & 1.58 & 0.32 & 0.28   & 10.84  & 0 \\
GR-GFNs (30)*     & 1.43 & 0.28 & 0.15   & 4.55 & 2 \\
Cheap GR-GFNs*    & 1.55 & 0.34 & \textbf{0.01}   & 3.69   & 6 \\
\midrule
\textbf{SEH-MW} \\
\midrule
OP-GFNs          & 1.95 & \textbf{0.21} & 0.18   & 6.83   & 0 \\
PC-GFNs          & 1.85 & 0.32 & 0.36   & 11.54  & 0 \\
GC-GFNs          & \textbf{2.02} & 0.29 & 0.35   & 11.27  & 0 \\
GR-GFNs (30)*     & 0.64 & 0.37 & 0.35   & 1.66   & 2 \\
Cheap GR-GFNs*    & 1.55 & 0.28 & \textbf{0.13}   & \textbf{1.35}   & \textbf{9} \\
\midrule
\textbf{QED-SA} \\
\midrule
OP-GFNs          & 0.85 & 0.39 & 0.44   & \textbf{1.93}   & \textbf{15} \\
PC-GFNs          & \textbf{1.73} & \textbf{0.06} & 0.35   & 11.64  & 0 \\
GC-GFNs          & 1.47 & 0.07 & 0.33   & 11.8   & 0 \\
GR-GFNs (30)*     & 1.33 & 0.36 & 0.24   & 3.32   & 5 \\
Cheap GR-GFNs*    & 1.04 & 0.37 & \textbf{0}      & 7.8    & 3 \\
\midrule
\textbf{QED-MW} \\
\midrule
OP-GFNs          & 0    & 0.5  & 0.83   & \textbf{1.27}   & \textbf{2} \\
PC-GFNs          & 0.69 & 0.44 & 0.38   & 12.1& 0 \\
GC-GFNs          & \textbf{1.32} & \textbf{0.43} & \textbf{0.36}   & 12.9   & 0 \\
GR-GFNs (30)*     & 0    & 0.51 & 0.81   & 1.72   & 1 \\
Cheap GR-GFNs*    & 0    & 0.51 & 0.43   & 2.39   & 0 \\
\midrule
\textbf{SA-MW} \\
\midrule
OP-GFNs          & 0    & 0.53 & \textbf{0}      & \textbf{0}      & \textbf{1} \\
PC-GFNs          & 0.87 & 0.38 & 0.32   & 10.86  & 0 \\
GC-GFNs          & \textbf{1.33} & \textbf{0.32} & 0.27   & 10.19  & 0 \\
GR-GFNs (30)*     & 0    & 0.53 & \textbf{0}      & \textbf{0}      & \textbf{1} \\
Cheap GR-GFNs*    & 0    & 0.53 & 0.02   & \textbf{0}      & \textbf{1} \\
\bottomrule
\end{tabular}

\label{tab:seh_table}
\end{table}

\newpage
\subsection{QM9}\label{appendix:QM9}
\subsubsection{Objective Functions}
The environment QM9 \cite{qm9} provides different metrics when evaluating the sequentally generation of molecules of up to 9 atoms and different bonds.
Following previous works \cite{moo_gfn,OP-Gfns} we consider the following objective functions:
\begin{itemize}
    \item \texttt{MXMNet}: This is the main reward, a proxy \cite{mxmnet} trained to predict the HOMO-LUMO gap.
    \item \texttt{logP}: The molecular logP target which can be extracted from RDKit: $\text{R}_{logP} = \exp\left(-(\texttt{logP} - 2.5)^2 / 2\right)$
    \item \texttt{SA}: Same as \ref{func:SA}
    \item \texttt{MW}: In this case we modify this function following what has been done in \cite{moo_gfn}: $\text{R}_{MW} = \exp\left(-(\texttt{MW} - 105)^2 /150 \right)$
\end{itemize}
\subsubsection{Network Structure and Training Details}
We examined them under varying training durations. Specifically, we subjected both PC-GFNs and GR-GFNs to 100000 steps. In contrast, OP-GFNs underwent a shorter training period with 50000 steps, while Cheap-GR-GFNs had the least, with 30000 steps. As in the previous experiment the chosen model is the GNN, with 4 layers and number of embeddings being 128.
\subsubsection{Plots and Tables}
\begin{figure}[h]
    \centering
    \includegraphics[width=1\textwidth]{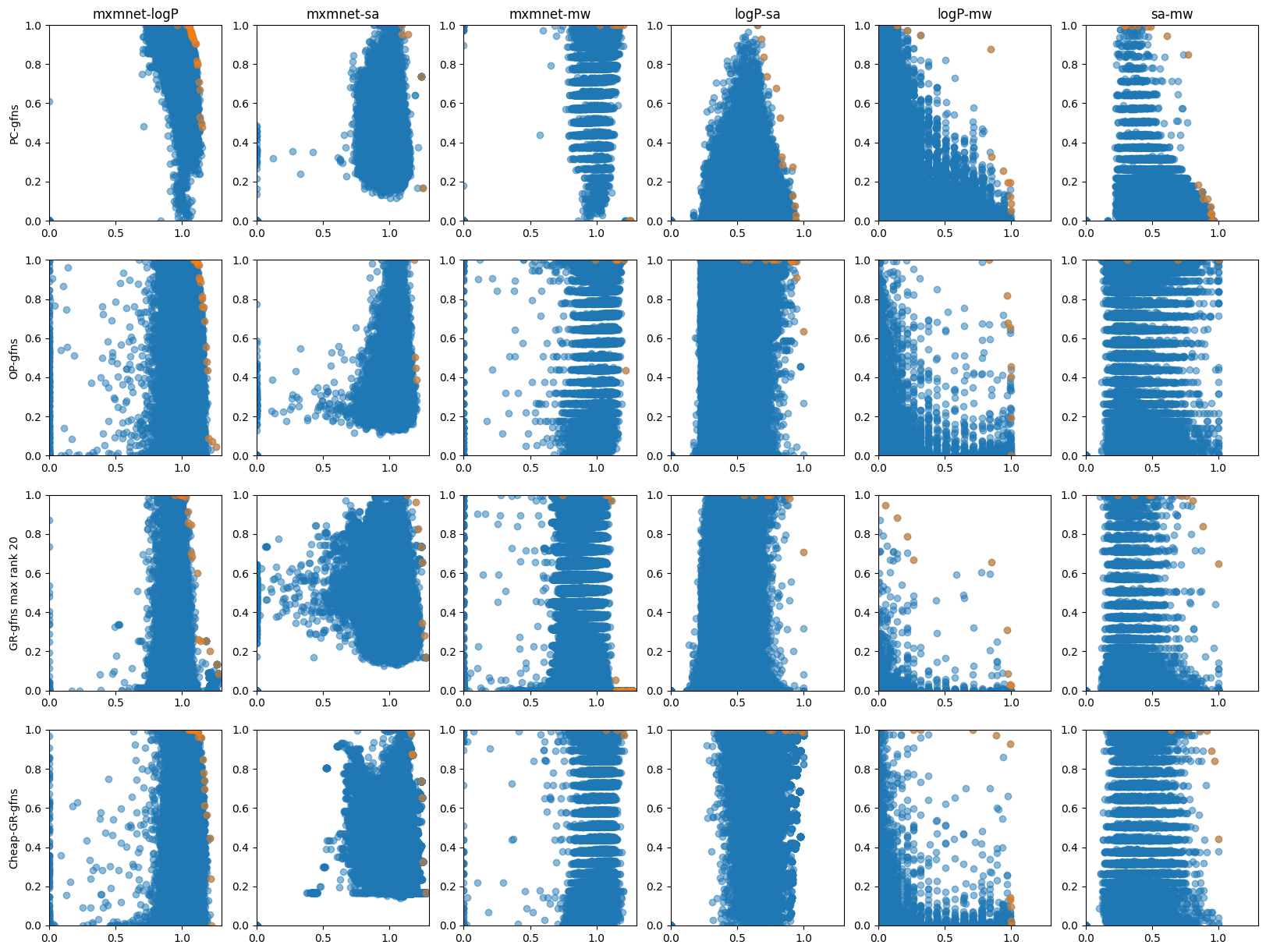}
    \caption{All generated samples (blue) for each method and their respective Pareto front (orange)}
    \label{fig:qm9_plots}
\end{figure}
\begin{table}[h]
\caption{Results for QM9 benchmark}
\centering
\sisetup{
  table-format=2.2,
  detect-weight=true,
  detect-inline-weight=math
}
\begin{tabular}{
  l
  S[table-format=1.2]
  S[table-format=1.2]
  S[table-format=1.2]
  S[table-format=2.2]
  S[table-format=1.0]
}
\toprule
{\textbf{MXMNet-logP}} & {PC-ent (\up)} & {IGD (\down)} & {d$_H(P',P)$ (\down)} & {R$_2$ (\down)} &  {Non-dominated (\up)} \\
\midrule
PC-GFNs          & 1.38 & 0.33 & \textbf{0.1}   & 62.47 & 0 \\
OP-GFNs        & 1.55 & 0.31 & 0.71  & 44.8  & 7 \\
GR-GFNs (20)* & 1.37 & \textbf{0.29} & 1.01  & \textbf{8.7}   & 2 \\
Cheap GR-GFNs*   & \textbf{1.75} & \textbf{0.29} & 0.88  & 26.11 & \textbf{12} \\
\midrule
\textbf{MXMNet-SA} \\
\midrule
PC-GFNs           & 0.95 & 0.33 & 0.33  & 66.25 & 0 \\
OP-GFNs        & 1.39 & 0.39 & 0.77  & 33.32 & 1 \\
GR-GFNs (20)* & 1.56 & \textbf{0.31} & 0.69  & 37.23 & \textbf{6} \\
Cheap GR-GFNs*   & \textbf{1.67} & 0.32 & \textbf{0.24}  & \textbf{32.79} & 3 \\
\midrule
\textbf{MXMNet-MW} \\
\midrule
PC-GFNs           & \textbf{0.96} & 0.35 & \textbf{0.09}  & 32.2  & 3 \\
OP-GFNs        & 0.87 & 0.32 & 0.95  & \textbf{19.64} & 3 \\
GR-GFNs (20)* & 0.82 & \textbf{0.27} & 0.42  & 27.37 & \textbf{5} \\
Cheap GR-GFNs*   & 0.56 & 0.5  & 1.09  & 25.01 & 3 \\
\midrule
\textbf{logP-SA}\\
\midrule
PC-GFNs           & \textbf{1.82} & 0.23 & \textbf{0.41}  & 51.74 & 0 \\
OP-GFNs        & 1.63 & \textbf{0.22} & 0.82  & 8.33  & 1 \\
GR-GFNs (20)* & 1.49 & 0.25 & 1.1   & 8.82  & 0 \\
Cheap GR-GFNs*   & 1.04 & 0.41 & 0.44  & \textbf{3.54}  & \textbf{8} \\
\midrule
\textbf{logP-MW} \\
\midrule
PC-GFNs           & \textbf{1.98} & 0.14 & 0.77  & 35.18 & 2 \\
OP-GFNs        & 1.73 & 0.22 & 0.99  & 1.43  & \textbf{5} \\
GR-GFNs (20)* & 1.89 & 0.2  & \textbf{0.72}  & 1.66  & 1 \\
Cheap GR-GFNs*   & 1.83 & \textbf{0.13} & 0.99  & \textbf{0.62}  & 3 \\
\midrule
\textbf{SA-MW} \\
\midrule
PC-GFNs           & 1.75 & \textbf{0.17} & \textbf{0.36}  & 51.64 & 0 \\
OP-GFNs        & 1.1  & 0.32 & 0.83  & \textbf{0.01}  & 1 \\
GR-GFNs (20)* & \textbf{1.89} & 0.18 & 1.02  & 6.6   & 0 \\
Cheap GR-GFNs*   & 1.56 & 0.2  & 0.98  & 7.04  & \textbf{3} \\
\bottomrule
\end{tabular}

\label{tab:qm9_table}
\end{table}
\begin{figure}
    \centering
    \begin{subfigure}[b]{0.49\textwidth}
        \includegraphics[width=\textwidth]{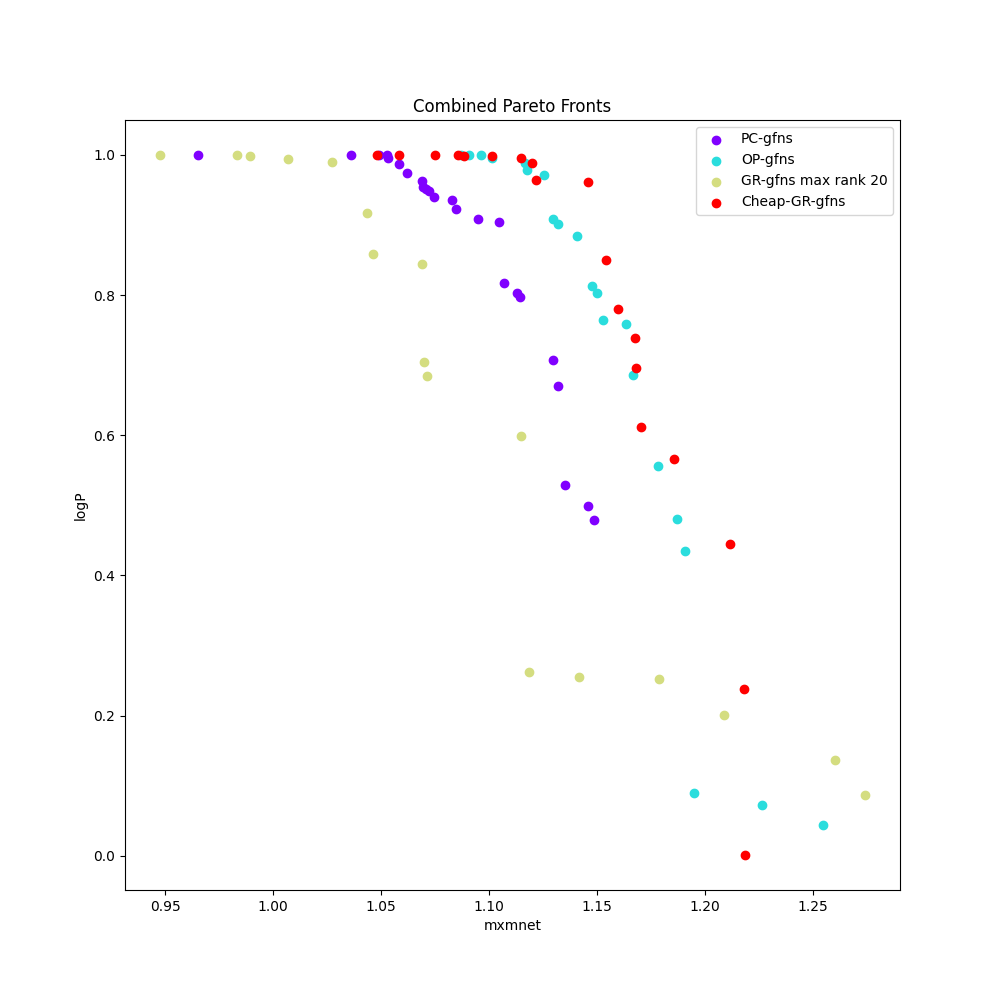}
        \label{fig:uniform}
    \end{subfigure}
     \begin{subfigure}[b]{0.49\textwidth}
        \includegraphics[width=\textwidth]{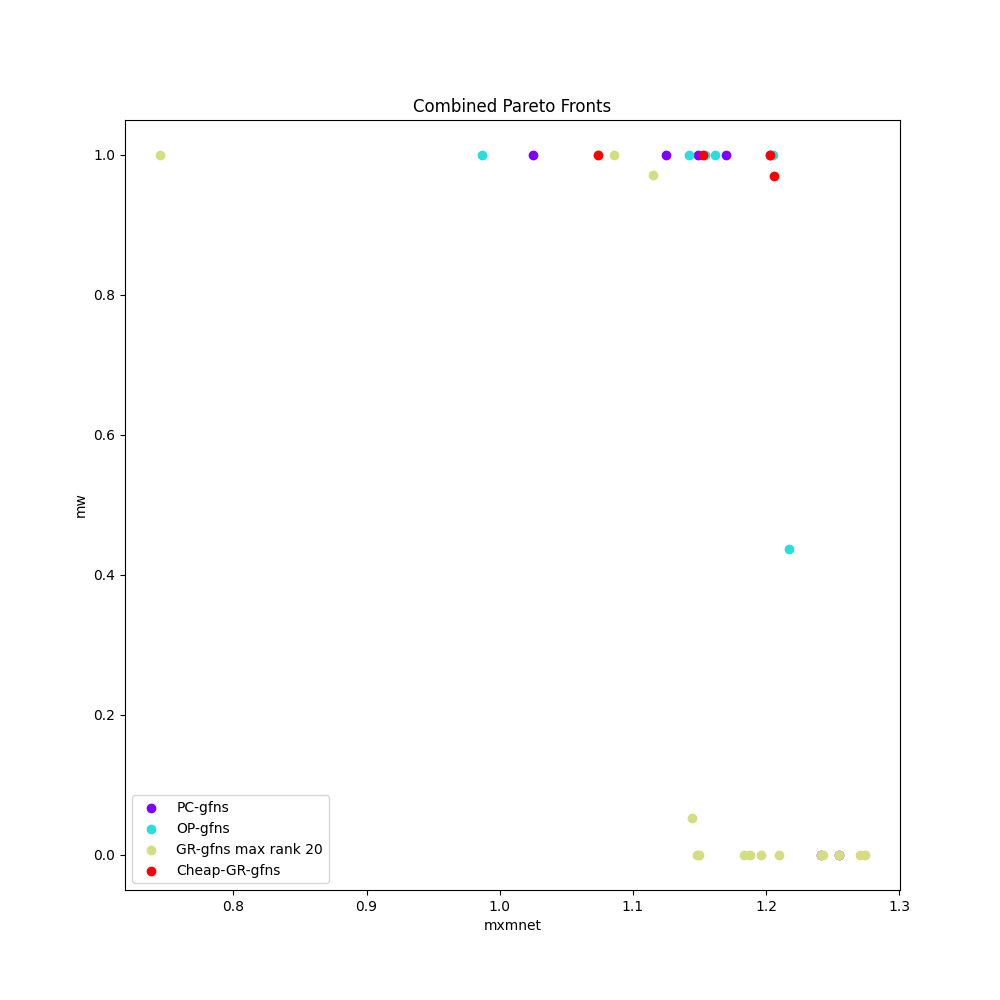}
        \label{fig:uniform}
    \end{subfigure}
    \hfill 
    \begin{subfigure}[b]{0.49\textwidth}
        \includegraphics[width=\textwidth]{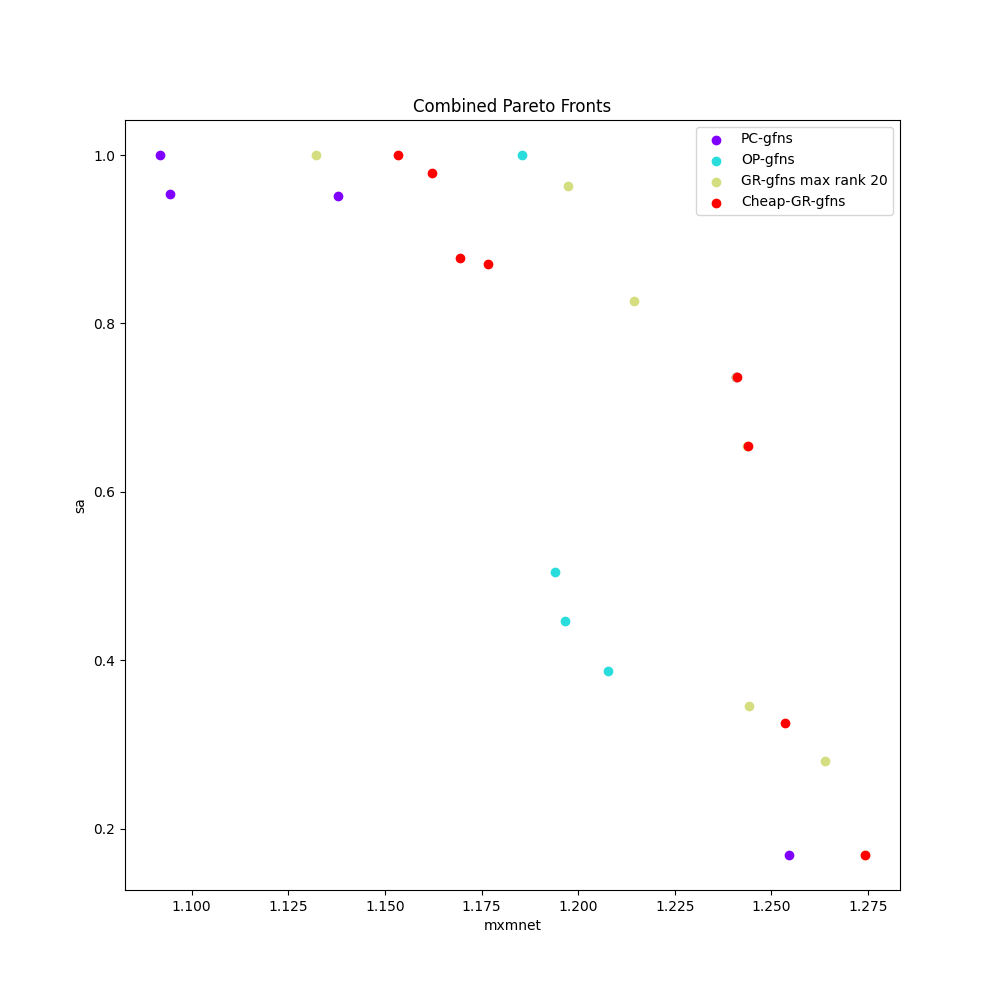}
        \label{fig:uniform}
    \end{subfigure}
    \begin{subfigure}[b]{0.49\textwidth}
        \includegraphics[width=\textwidth]{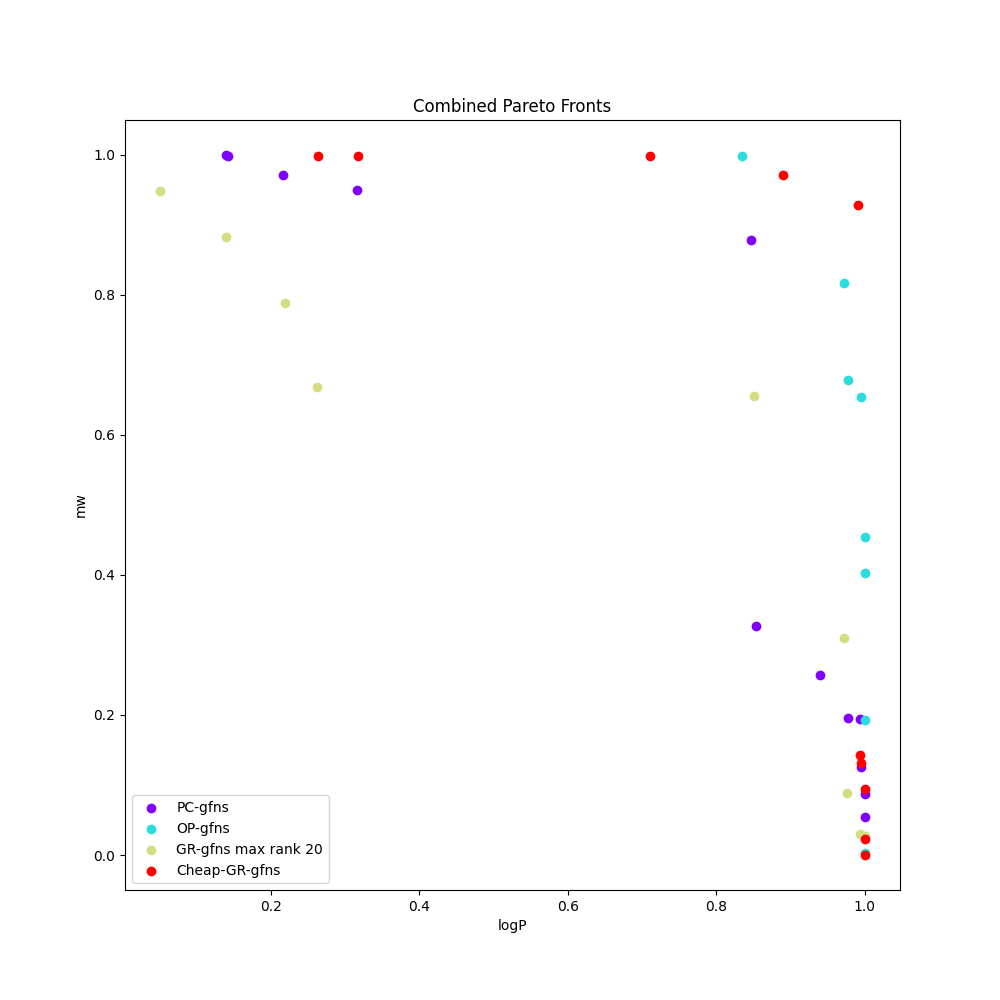}
        \label{fig:uniform}
    \end{subfigure}
    \hfill 
    \centering
     \begin{subfigure}[b]{0.49\textwidth}
        \includegraphics[width=\textwidth]{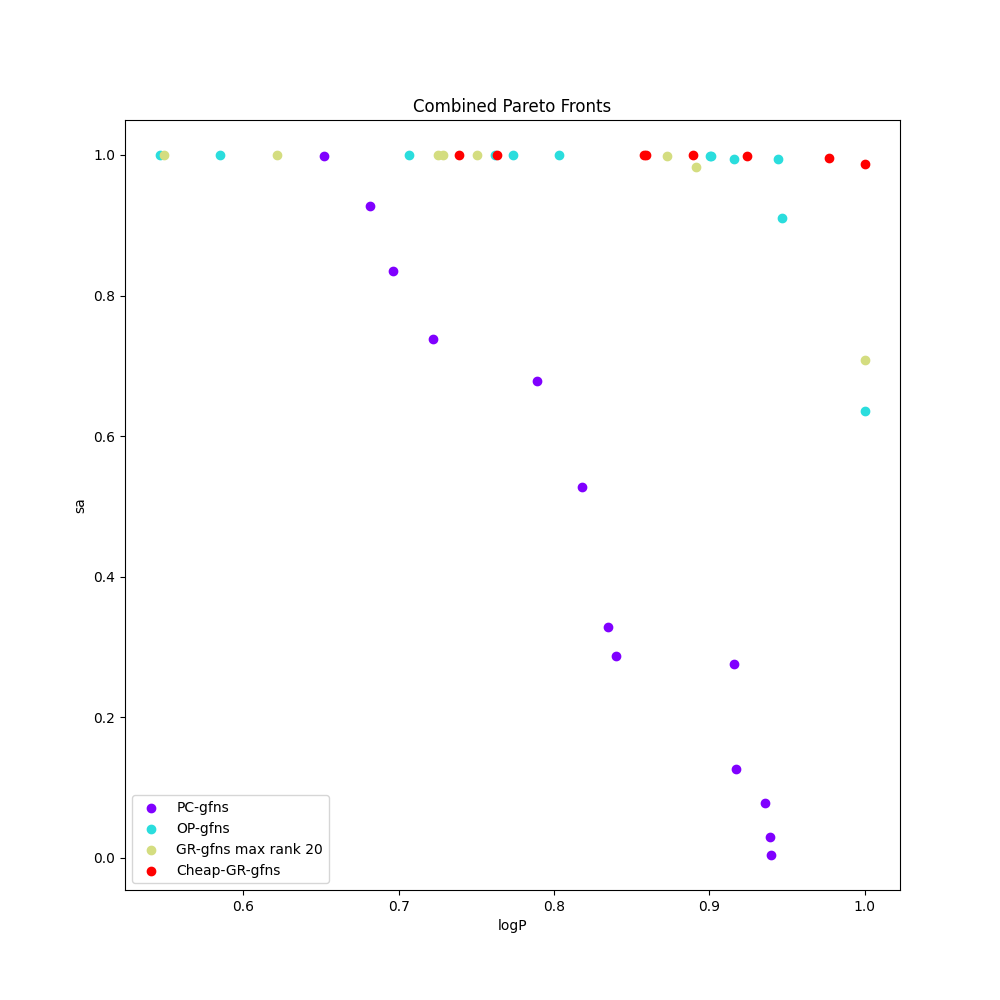}
        \label{fig:uniform}
    \end{subfigure}
    \begin{subfigure}[b]{0.49\textwidth}
        \includegraphics[width=\textwidth]{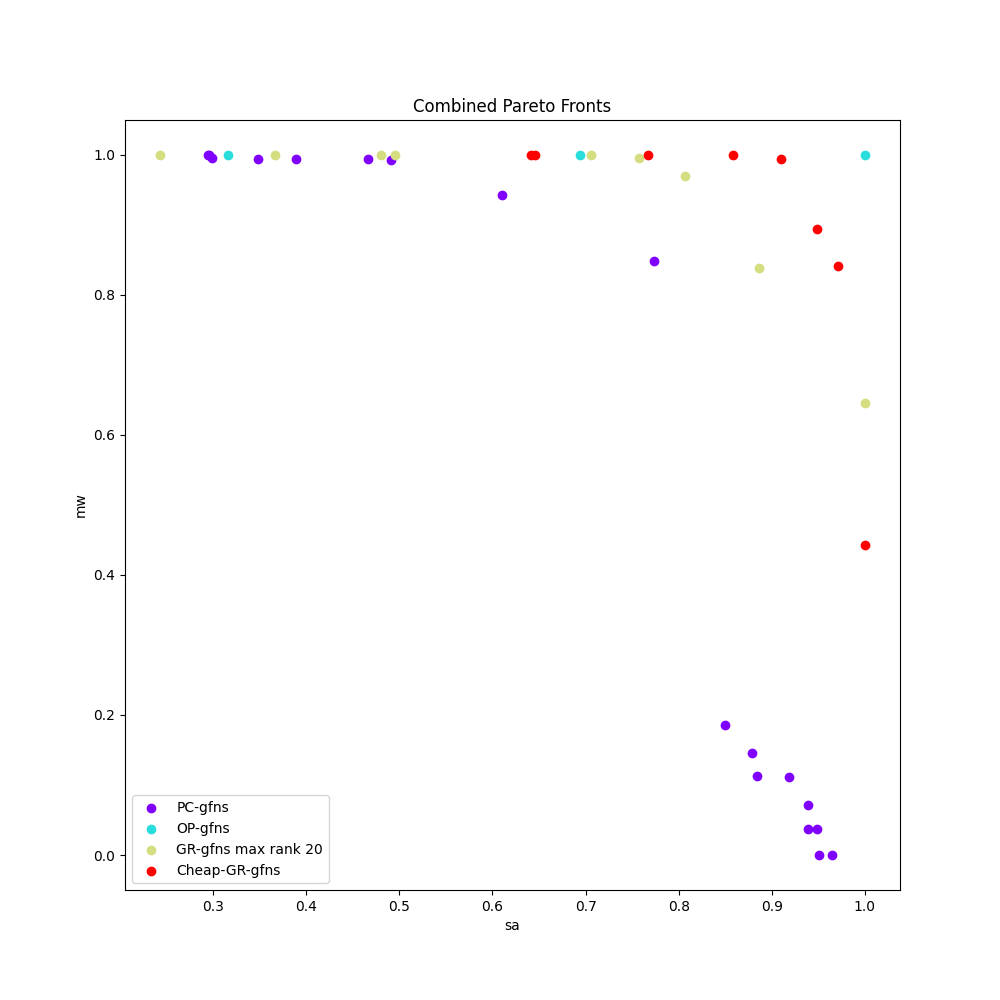}
        \label{fig:uniform}
    \end{subfigure}
\caption{QM9: Comparison of the different Pareto fronts provided by each method.}
\label{fig:qm9_pareto_plots}
\end{figure}

\end{document}